\documentclass[letterpaper]{article} 
\usepackage{aaai2026}  
\usepackage{times}  
\usepackage{helvet}  
\usepackage{courier}  
\usepackage[hyphens]{url}  
\usepackage{graphicx} 
\urlstyle{rm} 
\usepackage{natbib}  
\usepackage{caption} 
\frenchspacing  
\setlength{\pdfpagewidth}{8.5in}  
\setlength{\pdfpageheight}{11in}  
%
\usepackage{algorithmic}
\usepackage{bm}
\usepackage{amssymb}
\usepackage{amsmath}
\usepackage{amsthm}
\usepackage{multirow}
\newtheorem{definition}{Definition}
\newtheorem{lemma}{Lemma}
\usepackage[ruled,linesnumbered]{algorithm2e}
\usepackage{subfigure}
\usepackage{appendix}

%
\pdfinfo{
/TemplateVersion (2026.1)
}

\setcounter{secnumdepth}{0} 

%


\title{Prototype-Based Semantic Consistency Alignment for Domain Adaptive Retrieval}
\author{
   Tianle Hu\textsuperscript{\rm 1}\thanks{Code is available at: \protect\url{https://github.com/SkyHappyHu}},
    Weijun Lv\textsuperscript{\rm 2},
    Na Han\textsuperscript{\rm 3},
    Xiaozhao Fang\textsuperscript{\rm 2}\thanks{Corresponding author},
    Jie Wen\textsuperscript{\rm 4}, Jiaxing Li\textsuperscript{\rm 5}, Guoxu Zhou\textsuperscript{\rm 2}
}
\affiliations{
    \textsuperscript{\rm 1}School of Computer Science and Technology, Guangdong University of Technology, Guangzhou 510006, China\\
    \textsuperscript{\rm 2}School of Automation, Guangdong University of Technology, Guangzhou 510006, China\\
    \textsuperscript{\rm 3}School of Computer Science, Guangdong Polytechnic Normal University, Guangzhou 510665, China\\
    \textsuperscript{\rm 4}School of Computer Science and Technology, Harbin Institute of Technology, Shenzhen 518055, China\\
    \textsuperscript{\rm 5}School of Artificial Intelligence, Guangzhou University, Guangzhou 510006, China\\
    \{hutianlegdut, lvweijun0201\}@163.com, \{hannagdut, xzhfang168, jiewen\_pr\}@126.com, jiaxing.li.cs@gmail.com, gx.zhou@gdut.edu.cn
%
}

%

\usepackage{bibentry}

\begin{document}

\maketitle

\begin{abstract}
Domain adaptive retrieval aims to transfer knowledge from a labeled source domain to an unlabeled target domain, enabling effective retrieval while mitigating domain discrepancies. However, existing methods encounter several fundamental limitations: 1) neglecting class-level semantic alignment and excessively pursuing pair-wise sample alignment; 2) lacking either pseudo-label reliability consideration or geometric guidance for assessing label correctness; 3) directly quantizing original features affected by domain shift, undermining the quality of learned hash codes. In view of these limitations, we propose Prototype-Based Semantic Consistency Alignment (PSCA), a two-stage framework for effective domain adaptive retrieval. In the first stage, a set of orthogonal prototypes directly establishes class-level semantic connections, maximizing inter-class separability while gathering intra-class samples. During the prototype learning, geometric proximity provides a reliability indicator for semantic consistency alignment through adaptive weighting of pseudo-label confidences. The resulting membership matrix and prototypes facilitate feature reconstruction, ensuring quantization on reconstructed rather than original features, thereby improving subsequent hash coding quality and seamlessly connecting both stages. In the second stage, domain-specific quantization functions process the reconstructed features under mutual approximation constraints, generating unified binary hash codes across domains. Extensive experiments validate PSCA's superior performance across multiple datasets.
\end{abstract}


\section{Introduction}
Hashing receives extensive attention in the field of image retrieval due to its merits of compact storage and computational efficiency. The main purpose of hashing is to develop effective hash functions that preserve similarity relationships of original data in binary Hamming space. Several methods, such as Spectral hashing (SH) \cite{weiss2008spectral}, Density Sensitive Hashing (DSH) \cite{liu2016deep} and Scalable Graph Hashing (SGH) \cite{jiang2015scalable} endeavor to preserve pair-wise similarity of original data within the Hamming space. Ordinal Constraint Hashing (OCH) \cite{liu2018ordinal} introduces the ordinal relation in learning to hash. Iterative Quantization (ITQ) \cite{gong2012iterative} focuses on maintaining the locality structure by improving the consistency between generated discrete codes and their corresponding continuous representations.

Nonetheless, these aforementioned methods assume that queries and retrieved images share consistent data distributions, limiting their applicability in complex real-world scenarios. For instance, online shopping platforms showcase product images shot under ideal conditions, whereas user-submitted query photos typically contain cluttered backgrounds. To bridge this non-negligible domain gap \cite{hu2025coarse}, Domain Adaptation (DA) \cite{zhang2023hybrid,zhang2023fine} is combined with hashing, giving rise to a promising research field, i.e., Domain Adaptive Retrieval (DAR).

DAR encompasses two retrieval scenarios, i.e., single-domain retrieval and cross-domain retrieval. The former supposes both queries and retrieved samples originate from the target domain. In the context of cross-domain retrieval, the source domain serves as the retrieved dataset while queries stem from the target domain. Recently, several DAR methods are proposed. Probability Weighted Compact Feature (PWCF) \cite{huang2020probability} utilizes a focal-triplet constraint to mitigate the domain gap in a domain-invariant subspace. Domain Adaptation Preconceived Hashing (DAPH) \cite{huang2021domain} introduces Maximum Mean Discrepancy (MMD) \cite{gretton2012kernel} to prompt the domain marginal distribution alignment. These geometry-oriented methods lack consideration of semantic relationships between features and labels, resulting in suboptimal performance when significant semantic misalignment exists. Consequently, subsequent methods shift their focus toward incorporating semantic guidance. Two-Step Strategy (TSS) \cite{chen2023two} proposes a discriminative semantic fusion for hash learning. Semantic Guided Hashing Learning (SGHL) \cite{zhang2023semantic} and Dynamic Confidence Sampling and Label Semantic Guidance (DCS-LSG) \cite{zhang2024dynamic} further align the cross-domain conditional distributions by integrating category labels.

Despite their promising performance, we identify certain critical limitations of current DAR methods: 1) excessive focus on pair-wise sample alignment. Specifically, PWCF, TSS, SGHL and DCS-LSG primarily minimize distribution discrepancies between semantically consistent sample pairs, which suffer from computational inefficiency and limited distributional coverage of data \cite{yuan2020central}. 2) inadequate handling of pseudo-label reliability. Pseudo-labeling serves to predict the latent semantic associations between classes and unlabeled data, consequently providing fully annotated data to facilitate knowledge transfer. However, existing methods typically adopt off-the-shelf strategies, neglecting correction mechanisms for erroneous predictions. This inevitably leads to biased domain alignment and degraded hash codes quality. Although the most recent method, DCS-LSG, considers pseudo-label noise, it relies solely on semantic consensus between dual projections, without incorporating geometric knowledge for reliability assessment. 3) directly mapping original features with imperfect domain alignment to Hamming space, resulting in high quantization errors and limited discriminative power of generated codes.

To systematically tackle the limitations mentioned above, we propose the Prototype-Based Semantic Consistency Alignment (PSCA) framework. The core innovation lies in a semantic consistency alignment that evaluates pseudo-label reliability by comparing geometric proximity with semantic predictions, adaptively weighting the pseudo-labels. To be precise, in the first stage, PSCA establishes orthogonal class prototypes within a domain-shared subspace, where the semantic consistency alignment performs as follows: when geometric and semantic indicators agree, semantic weights are adjusted based on decision margins, as larger margins reflect stronger prediction confidence; when they conflict, semantic contribution is reduced proportionally to the disagreement level. This process derives a soft membership matrix that guides prototype learning in turn, thereby capturing more reliable semantic connections that mitigate error propagation.

After stage one, the membership matrix and prototypes reconstruct enhanced features, ensuring superior coding quality by circumventing direct quantization strategies. In the second stage, domain-specific quantization functions quantize the reconstructed features under mutual approximation constraints, capturing domain-specific characteristics while facilitating unified hash learning. Figure \ref{method} illustrates the PSCA framework, and the primary contributions are:
\begin{itemize}
	\item {\texttt{}} An orthogonal prototype learning method is proposed that achieves effective class-level semantic alignment instead of pursuing pair-wise sample alignment.
	\item {\texttt{}} A semantic consistency alignment is designed to dynamically correct unreliable pseudo-labels by combining geometric proximity with semantic predictions, addressing the semantic error accumulation problem.
	\item {\texttt{}} A feature reconstruction strategy leverages discriminative prototypes and membership matrix to create enhanced feature representations, ensuring hash quantization on reliable rather than noisy information.
	\item {\texttt{}} Comprehensive experiments demonstrate that PSCA outperforms the existing state-of-the-art DAR methods.
\end{itemize}
\begin{figure*}[t]
	\begin{center}
		\includegraphics[scale=0.58]{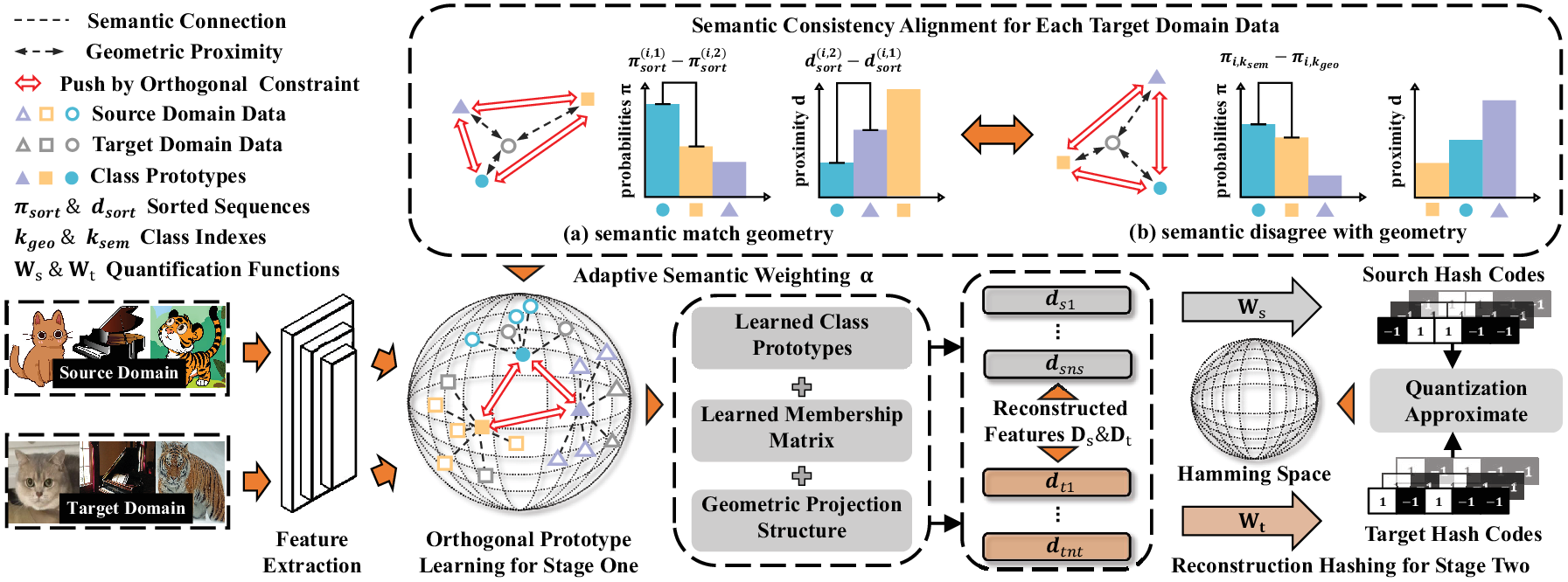}\\
		\caption{The framework of the proposed PSCA}
		\label{method}
	\end{center}
\end{figure*}
\section{Related Work}
\subsection{Hash Learning}
Hashing seeks to encode high-dimensional data into low-dimensional binary codes with maintaining similarity relationships, offering the merits of low storage demand and high retrieval efficiency \cite{hu2025consistent}. Hashing can be divided into two categories according to whether the semantic information is available, i.e., unsupervised hashing and supervised hashing. Supervised methods like DCMVH \cite{zhu2020deep} and MSDH \cite{luo2019discrete} have achieved noteworthy performance. Nevertheless, the significant cost related to annotating labels brings a major obstacle to their scalability. By contrast, benefiting from no need for explicit semantics, unsupervised hashing methods serve as the more suitable retrieval manner for real-world applications. Some representative methods among this category include SGH \cite{jiang2015scalable} and GraphBit \cite{wang2023learning}. However, unsupervised algorithms may pose a potential risk of limiting the discriminative power of hash codes due to the lack of explicit semantics. While admiring the positive performance of these methods, they are limited for DAR scenarios due to their assumption of retrieving in a single domain.
\subsection{Domain Adaptive Retrieval}
DAR assumes that there exists a well-labeled source domain and an unlabeled target domain, where these two domains are related yet different in distributions. It aims to mitigate the domain discrepancies and preserving semantic similarity in the learned hash codes, thereby achieving effective retrieval. PWCF \cite{huang2020probability} and DAPH \cite{huang2021domain} geometrically constrain the distribution of samples to bridge domain gap. Since two domains share a common semantic space, TSS, SGHL and DCS-LSG further align pseudo-labels to target domain samples to encourage the transfer of domain-specific valuable insights. Note that aforementioned methods and the proposed PSCA are on the basis of machine learning principles. 

With the rise of deep learning, several advanced deep learning-based DAR methods have emerged recently. PEACE \cite{wang2023toward} considers the uncertainty of pseudo-labels and progressively boosts their reliability. CPH \cite{cui2024effective} innovatively constructs a domain-shared unit subspace, then aligns domains through prototype contrastive learning. COUPLE \cite{luo2025cross} employs graph flow diffusion to simulate the cross-domain knowledge transfer, dynamically identifying lower noise clusters.

\section{Proposed Method}
\subsection{Problem Definition}
Assume we have a source domain ${\mathcal{D}_{s}=\{\mathbf{x}_{s_i},\mathbf{y}_{s_i}\}_{i=1}^{n_{s}}}$ comprising ${n_s}$ samples, a target domain ${\mathcal{D}_t=\{\mathbf{{{x}}}_{t_i}\}_{i=1}^{n_t}}$ including ${n_t}$ samples. Let ${\mathbf{X}_{s}\in\mathbb{R}^{d\times n_{s}}}$ and ${\mathbf{{{X}}}_{t}\in\mathbb{R}^{d\times n_{t}}}$ as the original sample matrices of two domains respectively, where ${d}$ denotes the feature dimension. We define ${\mathbf{X}=[\mathbf{X}_{s},\mathbf{{{X}}}_{t}]\in\mathbb{R}^{d\times n}}$ as the total sample matrix, where $n = n_s +n_t$. For each target sample $\mathbf{x}_{t_i}$, we obtain pseudo-label probabilities $\boldsymbol{\pi}_i = [\pi_{i1}, \pi_{i2}, \ldots, \pi_{ic}] \in \mathbb{R}^{1 \times c}$ using Nearest Class Prototype (NCP) and Structured Prediction (SP) \cite{wang2020unsupervised}, where $c$ is the number of categories. Let $\pi_{\text{sort}}^{(i,e)}$ be the $e$-th largest element in $\boldsymbol{\pi}_i$, where $\pi_{\text{sort}}^{(i,1)} \geq \pi_{\text{sort}}^{(i,2)} \geq \cdots \geq \pi_{\text{sort}}^{(i,c)}$. Then the pseudo-label is assigned according to $\pi_{\text{sort}}^{(i,1)}$. The detailed pseudo-labeling process is presented in the supplementary materials. Define the overall label set as ${\mathbf{Y}=[\mathbf{Y}_{s},\mathbf{\hat{Y}}_{t}]^\top}\in\{0,1\}^{n\times c}$, where $\mathbf{\hat{Y}}_{t}\in\mathbb{R}^{n_t\times c}$ are the pseudo-labels. The ultimate objective of DAR is to learn a set of similarity-preserving hash codes ${\mathbf{B}=[\mathbf{b}_{{s_1}},\cdots,\mathbf{b}_{{s_n}_s},\mathbf{b}_{{t_1}},\cdots,\mathbf{b}_{{t_n}_t}]\in\{-1,1\}^{r\times n}}$ for effective retrieval, where $r$ represents the hash code length.

\subsection{Prototype-Based Semantic Consistency Alignment}
Due to domain shift, cross-domain samples with similar semantics may exhibit significantly distinct distributions. To bridge this gap, we first employ Maximum Mean Discrepancy (MMD) to align domain marginal distributions:
\begin{equation}
	\begin{aligned}
		\label{marginal_distribution}
		\min_{\mathbf{P}}
		&{\left\|\frac{1}{{n}_s}\sum_{i=1}^{{n}_s}\mathbf{P}^\top{\mathbf{x}_i}-\frac{1}{{n}_t}\sum_{{q}={n}_s+1}^{{n}_s+{n}_t}\mathbf{P}^\top{\mathbf{{x}}_q}\right\|_2^2}
		\\&{=\mathrm{Tr}({\mathbf{P}^\top}\mathbf{X}\mathbf{H}{\mathbf{X}^\top}\mathbf{P})}
	\end{aligned}
\end{equation}
where $\mathbf{P}\in\mathbb{R}^{d\times q} $ is the projection matrix that maps original features into a common $q$-dimensional subspace ($q<<d$), where the domain gap is expected to be bridged. $\mathbf{H}$ is the MMD centering matrix defined as:
\begin{equation}
	\quad {{h}_{iq}}=\left\{ \begin{aligned}
		\label{F_2}
		&{\frac{1}{n_sn_s},\quad \mathbf{x}_i,\mathbf{x}_q\in\mathcal{D}_{s}} \\
		&{\frac{-1}{n_sn_t},\quad \mathbf{x}_i\in\mathcal{D}_{s}} \wedge\mathbf{{x}}_q\in\mathcal{D}_{t}\\
		&{\frac{1}{n_tn_t},\quad {\mathbf{{x}}_i},{\mathbf{{x}}_q}\in\mathcal{D}_{t}}\\
	\end{aligned} \right.
\end{equation}

Apart from marginal discrepancies, semantic structure within data distributions requires consideration. Existing methods achieve this by aligning conditional distributions or semantically consistent sample pairs, yet suffer from computational inefficiency and sensitivity to outliers.

To circumvent these issues, we introduce a prototype-based approach that learns discriminative class centers ${\mathbf{O}=[\mathbf{o}_{1},\cdots,\mathbf{o}_{c}]\in\mathbb{R}^{{q}\times c}}$, directly modeling class-level semantic connections to facilitate effective class alignment:
\begin{equation}
	\begin{aligned}
		\label{central_similarity}
		\min_{\mathbf{P},\mathbf{O}}\sum_{i=1}^{n}\sum_{j=1}^{c}y_{ij}\|\mathbf{P}^{\top}\mathbf{x}_{i}-\mathbf{o}_{j}\|_{2}^{2},\quad
		{\text{s.t.}~\mathbf{O}^{\top}\mathbf{O}=\mathbf{I}_{c}},
	\end{aligned}
\end{equation}
where $\mathbf{I}_c$ represents a $c$-dimensional identity matrix. Ideally, $\mathbf{O}$ can effectively reduce the intra-class divergence by gathering samples with identical semantics. Meanwhile, $\mathbf{O}^{\top}\mathbf{O}=\mathbf{I}_{c}$ ensures that different prototypes are maximally separated by forcing the inner product between distinct prototypes to be zero. Nevertheless, Eq. \eqref{central_similarity} exhibits two critical limitations: 1) incorrect $\mathbf{\hat{y}}_{t}$ shifts these domain-shared prototypes away from their latent ground-truth positions; 2) binary pseudo-labels fail to convey prediction confidence, hampering assessment of assignment reliability.

Addressing the above challenges, we propose a semantic consistency alignment that tackles them simultaneously. Since samples sharing the identical prototype have consistent labels, samples closer to prototypes are inherently more reliable for prediction than those at cluster boundaries. Therefore, $\mathbf{R}\in\mathbb{R}^{n_t\times c}$ is designed as a soft membership matrix, where each element $r_{ij}$ provides a more reliable membership degree than ${\hat{y}}_{ij}$. This is achieved by incorporating the geometric proximity $d_{ij}=\|\mathbf{P}^{\top}\mathbf{x}_{t_i}-\mathbf{o}_{j}\|_{2}^{2}$. In particular, $\mathbf{R}$ is optimized via the following objective:
\begin{equation}
	\begin{aligned}
		\label{R_obtain}
		\min_{\mathbf{R} \geq \mathbf{0}, \mathbf{R}\mathbf{1}_c = \mathbf{1}_{n_t}}\sum_{i=1}^{n_t}\sum_{j=1}^{c}\{{{r}}_{ij}^{\sigma}d_{ij}-{\psi}_{ij}\text{log}({r}_{ij})\}
	\end{aligned}
\end{equation}
where $\mathbf{1}$ denotes all-ones vector with dimension indicated by its subscript. In the first term, ${{r}}_{ij}$ measures geometric proximity of projected target samples to prototypes. The coefficient $\sigma>1$ amplifies the penalty differences as ${{r}_{ij}}^{\sigma}<{r}_{ij}$, creating non-linear down-weighting. The second term serves as a semantic-aware term where $\psi$ are elements of an adaptive weighting matrix $\boldsymbol\Psi = \boldsymbol{\alpha} \odot \hat{\mathbf{Y}_t}$, with $\boldsymbol{\alpha}=[\alpha_{1},\cdots,\alpha_{n_t}]^\top \in \mathbb{R}^{n_t \times 1}$ broadcasted to match dimensions. This term activates only when $\hat{y}_{ij} = 1$, prompting larger ${r}_{ij}$ values for the semantically predicted class. Here, $\alpha_{i}$ adaptively adjusts the semantic fusion strength by: 
\begin{equation}
	\alpha_i=\left\{ 
	\label{Alpha}
	\begin{aligned}
		&\frac{\pi_{\text{sort}}^{(i,1)} - \pi_{\text{sort}}^{(i,2)}}{d_{\text{sort}}^{(i,2)} - d_{\text{sort}}^{(i,1)}+\text{eps} }, &\text{if } {k}_{\text{geo}} = {k}_{\text{sem}}  \\
		&\pi_{\text{sort}}^{(i,1)}(1 - |\pi_{i,{k}_{\text{geo}}} - \pi_{i,{k}_{\text{sem}}}|), &\text{otherwise}
	\end{aligned} \right.
\end{equation}
where $\text{eps}$ prevents the situation of division by zero. Let $\mathbf{d}_i = [d_{i1}, d_{i2}, \ldots, d_{ic}] \in \mathbb{R}^{1 \times c}$ denote the distances from $\mathbf{x}_{t_i}$ to all prototypes, we have the sorted sequence as $d_{\text{sort}}^{(i,1)} \leq \cdots \leq d_{\text{sort}}^{(i,c)}$, where $d_{\text{sort}}^{(i,1)}$ indicates the smallest value in $\mathbf{d}_i$. $k_{\text{geo}} = \arg\min_j d_{ij}$ and $k_{\text{sem}} = \arg\max_j \pi_{ij}$ denote the indexs of geometrically closest and semantically preferred prototypes. When  geometric and semantic indicators are consistent ($k_{\text{geo}} = k_{\text{sem}}$), larger semantic margins encourage greater reliance on semantic information by increasing $\alpha_i$, while larger geometric margins favor geometric knowledge by decreasing $\alpha_i$. In the conflicting case, $\alpha_i$ is reduced based on the disagreement intensity, i.e., $|\pi_{i,{k}_{\text{geo}}} - \pi_{i,{k}_{\text{sem}}}|$, thereby decreasing reliance on potentially erroneous assignments.

With the optimized $\mathbf{R}$ from Eq. \eqref{R_obtain}, we form the unified semantic matrix as ${\mathbf{\widetilde{Y}}}=[\mathbf{Y}_{s},\mathbf{R}]^\top$. Integrating all key components above, stage one is formulated as:
\begin{equation}
	\begin{aligned}
		\label{Stage_One}
		&\min_{\mathbf{P},\mathbf{O}^{\top}\mathbf{O}=\mathbf{I}_{c}}\sum_{i=1}^{n}\sum_{j=1}^{c}{\widetilde{y}}_{ij}\|\mathbf{P}^{\top}\mathbf{x}_{i}-\mathbf{o}_{j}\|_{2}^{2}\\&
		+{\lambda_1\mathrm{Tr}({\mathbf{P}^\top}\mathbf{X}\mathbf{H}{\mathbf{X}^\top}\mathbf{P})}+\lambda_2\|\mathbf{P}\|_{2,1}
	\end{aligned}
\end{equation}
where $\lambda_1$ and $\lambda_2$ are the trade-off parameters. Here, $\|\mathbf{P}\|_{2,1}$ serves as a $\ell_{2,1}$-norm with row-sparsity constraint.

\subsection{Feature Reconstruction Hashing}
Given that projection $\mathbf{P}$ focuses on domain alignment, directly quantizing $\mathbf{P}^\top\mathbf{X}$ neglects the semantic discriminability established by prototype learning, inevitably reducing the hash coding quality. Thus, as an intermediate operation to avoid prematurely using features lacking semantic enhancement and seamlessly bridge both stages, we reconstruct semantically enhanced features by leveraging the reliable prototypes $\mathbf{O}$ and learned memberships $\mathbf{R}$ from stage one.

Since each row $\mathbf{r}_{i}=[{r}_{i1},{r}_{i2},...,{r}_{ic}]$ in $\mathbf{R}$ indicates the confidences of $\mathbf{x}_{t_i}$ belonging to each category, and the learned class prototypes $\mathbf{O}$ exhibit stronger discriminability than the original data due to the orthogonal constraint and semantic consistency alignment. Consequently, we reconstruct target samples $\mathbf{{x}}_{t_i}$ as $\mathbf{\widetilde{x}}_{t_i} = \sum_{m}^{c}{r}_{im}\mathbf{o}_{m}^\top$. This confidence-weighted combination of prototypes encodes semantically enhanced representations. Meanwhile, the reliable labels are embedded into the source reconstruction, i.e., $\mathbf{\widetilde{x}}_{s_i} = \sum_{m}^{c}{y}_{im}\mathbf{o}_{m}^\top$. This yields the novel representation $\mathbf{\widetilde{X}}=[\mathbf{\widetilde{x}}_{s_1},...,\mathbf{\widetilde{x}}_{s_{ns}},\mathbf{\widetilde{x}}_{t_1},...,\mathbf{\widetilde{x}}_{t_{nt}}]^\top\in\mathbb{R}^{n\times q}$ that better reveals the underlying ground-truth semantic structure. However, while $\mathbf{\widetilde{X}}$ excels in semantic discriminability, it may lose certain geometric structures of the original data, making sole reliance on $\mathbf{\widetilde{X}}$ insufficient for comprehensive representation during hash coding. Hence, we fuse $\mathbf{\widetilde{X}}$ and the projected features $\mathbf{P}^\top\mathbf{X}$. Let $\mathcal{C}=2q$, the overall reconstructed features are expressed as $\mathbf{D}=[{\mathbf{D}_{s}},{\mathbf{D}_{t}}]=[{\mathbf{d}_{s}}_1,...,{\mathbf{d}_{s}}_{ns}, {\mathbf{d}_{t}}_1,...,{\mathbf{d}_{t}}_{nt}]=[\mathbf{\widetilde{X}},\mathbf{X}^\top\mathbf{P}]^\top\in\mathbb{R}^{\mathcal{C}\times n}$. This concatenation both retains the reconstructed semantics and geometric projection structures.
Building upon the reconstructed features $\mathbf{D}$, we proceed to develop discriminative hash codes in stage two. Towards further mitigating the adverse effects of pseudo-labeling and domain discrepancies, meanwhile for better transferring domain-specific information, we design distinct functions for both domains, i.e., $\mathbf{W}_s\in\mathbb{R}^{r\times \mathcal{C}}$ and $\mathbf{W}_t\in\mathbb{R}^{r\times \mathcal{C}}$. Given our objective is to learn a unified Hamming space instead of two separate domain-specific hash codes. Hence, the dual functions are required to be approximated during hash learning. Ultimately, the objective of stage two is given below:
\begin{equation}
	\begin{aligned}
		\label{Stage_Two}
		&\begin{aligned}
			\min_{\mathbf{W}_s,\mathbf{W}_t,\mathbf{B}_s,\mathbf{B}_t}&\|\mathbf{W}_s\mathbf{D}_{s}-\mathbf{B}_{s}\|_{F}^{2}+\|\mathbf{W}_t\mathbf{D}_{t}-\mathbf{B}_{t}\|_{F}^{2}\\&
			+\lambda_3\|\mathbf{W}_s-\mathbf{W}_{t}\|_{F}^{2}
		\end{aligned}\\
		&\text{s.t.}~\mathbf{W}_s{\mathbf{W}_s}^{\top}=\mathbf{I}_{{r}},\mathbf{W}_t{\mathbf{W}_t}^{\top}=\mathbf{I}_{{r}},\mathbf{B}_s,\mathbf{B}_t \in \{-1,1 \}^{r\times *},
	\end{aligned}
\end{equation}
where the orthogonal constraints ensure independence hash bit encoding. $\lambda_3$ is a trade-off parameter and $*\in \{n_s,n_t \} $.

\begin{table*}[t]
	\centering
	\fontsize{9}{9.2}\selectfont
	\setlength{\tabcolsep}{1.4mm}
	\begin{tabular}{c|cccc|cccc|cccc|cccc}
		\hline
		Case&
		\multicolumn{4}{c|}{MNIST$\to$USPS} &
		\multicolumn{4}{c|}{COIL1$\to$COIL2} &
		\multicolumn{4}{c|}{A$\to$D}&\multicolumn{4}{c}{A$\to$W} \\ \hline
		Code length & 16 & 32  & 64  & 128 & 16 & 32  & 64  & 128 & 16 & 32  & 64  & 128 & 16 & 32  & 64 & 128\\ \hline
		SH & 15.56 & 13.67  & 13.54 & 12.95 & 40.18 &44.64  & 42.84& 38.36& 14.08& 13.62& 12.02& 10.91 & 12.04 & 11.97 & 9.83 & 9.90  \\
		DSH& 20.60& 22.21& 24.28& 26.50& 37.92& 44.85& 46.38& 46.05& 11.48& 13.86& 16.66& 19.88 & 9.58 & 12.14 &  15.09& 18.05\\
		SGH& 14.24& 16.69& 19.70& 21.95& 51.04& 52.31& 51.77& 50.53& 19.92& 21.19& 24.86& 27.50 & 16.95  & 20.13 & 22.47 & 25.44\\
		OCH& 13.73 &17.22 &20.18  &23.34 &46.50 &50.67  &55.25 &56.54 &14.29 &20.43 &24.86  &27.50 &14.85 & 20.24 &  22.49 & 25.86\\
		ITQ+& 22.84 &21.20  &19.15 & 18.52 &46.68 &50.49 &50.80  &50.63 &15.42 &16.74  &17.99  &16.59 & 14.94 & 16.19 & 15.00 & 15.21\\
		LapITQ+& 24.26 &24.03  &24.59 & 22.73 &44.44 &39.26 &34.20 &29.07 &17.53 &19.38& 19.96 & 18.12 & 15.10 & 17.80 & 18.24 &16.36 \\
		GTH-h& 19.47 &16.52 &15.33 &16.46 &44.89 &50.47&52.83  &52.93 &14.42 &20.38 &23.13 &24.08 & 13.55 &19.89  & 22.21  & 23.32\\
		PWCF& 43.90 &50.94 &52.51 & 53.17 &65.29 &64.98  &67.34 &67.00 &24.53 &29.57 &32.46 &34.55 & 21.98 &32.38 & 34.14 &35.21 \\
		DAPH*& 30.22 &35.14 &38.18 &39.36 &75.77 &77.58 &78.49&81.96& 29.14 &29.69 &28.17 &27.83 & 21.22 & 25.36 & 26.85 &28.74 \\
		SGHL& 62.95& 65.93 & 69.52 &71.46 &70.81 &78.71 &80.59 &83.00 &43.92 &51.55 &54.89 &59.91 & 44.49 &52.48  & 55.31 &55.64 \\
		TSS& \underline{64.19} &\underline{69.11} &\underline{72.59} &\underline{73.88} &78.07 &82.08  &\underline{85.27}& \underline{87.55} &17.83 &33.41 &44.86  &45.23 & 25.59 & 39.77 & 48.97 &53.23 \\
		DCS-LSG& 48.83 &53.31 &54.22 &59.88 &\underline{82.44} &\underline{83.36}  &84.38 &85.70& \underline{55.13}& \underline{58.82} &\underline{63.32} &\underline{64.59} & \underline{48.89} & \underline{53.90} & \underline{56.33} &\underline{57.13} \\\hline
		Ours& \textbf{86.05}& \textbf{86.47}& \textbf{87.35}& \textbf{88.71}  & \textbf{84.74}& \textbf{87.36}& \textbf{88.79}& \textbf{90.76}& \textbf{56.44}& \textbf{65.51}& \textbf{68.85}& \textbf{67.41} & \textbf{56.72} & \textbf{60.86} & \textbf{62.13} &  \textbf{65.78}\\ \hline 
	\end{tabular}
	\caption{Cross-domain retrieval performance (MAP\%) on MNIST$\rightarrow$USPS, COIL1$\rightarrow$COIL2, A$\rightarrow$D and A$\rightarrow$W with varying code lengths. The bolded figures indicate the highest scores, and underlined figures indicate the second-highest scores.}
	\label{table1}
\end{table*}

\section{Optimization}
\subsection{Solution Process}
As shown in Eqs. \eqref{Stage_One} and \eqref{Stage_Two}, there exist multiple variables that require to be solved, i.e., $\mathbf{P}$, $\mathbf{O}$, $\mathbf{R}$, $\mathbf{W}_s$, $\mathbf{W}_t$, $\mathbf{B}_s$ and $\mathbf{B}_t$. Due to space limitations, the detailed solution process of PSCA is presented in the supplementary materials. Meanwhile, an algorithm analysis subsection is also provided, including the computational complexity, the convergence analysis and running time comparison.

\subsection{Out-of-Sample Extension}
To facilitate generalization of unseen samples, a linear transformation matrix $\mathbf{\Phi}$ is derived to model the regression mapping between features and learned binary hash codes: 
\begin{equation}
	\begin{aligned}
		\label{HASH_function}
		\min_{\mathbf{\Phi}}\|\mathbf{\Phi}\mathbf{X}-\mathbf{B}\|_{F}^{2}+\beta\|\mathbf{\Phi}\|_{F}^2
	\end{aligned}
\end{equation}
where $\|\mathbf{\Phi}\|_{F}^2$ denotes a regularization term and $\beta$ serves as a balancing parameter. Eq. \eqref{HASH_function} can be easily solved with:
\begin{equation}
	\begin{aligned}
		\label{HASH_function_s}
		\mathbf{\Phi} = \mathbf{B}\mathbf{X}^\top(\mathbf{X}\mathbf{X}^\top+\beta\mathbf{I})^{-1}
	\end{aligned}
\end{equation}
Thereafter, the corresponding binary hash code of any query sample can be obtained by $\mathbf{b}_{query}=\text{sgn}(\mathbf{\Phi}\mathbf{x}_{query})$.

\section{Experiment}
\subsection{Dataset and Evaluation Metric}
We carry out extensive comparative experiments on four public benchmark datasets, namely Office-31 \cite{saenko2010adapting}, Office-Home \cite{venkateswara2017deep}, COIL20 \cite{nene1996columbia}, and MNIST-USPS \cite{lecun1998gradient, hull1994database}. \textbf{Office-31} contains 4,110 images shot in office environments across three domains: Amazon (A), Webcam (W), and DSLR (D). For experiments, we establish two transferable retrieval cases: A$\to$D and A$\to$W. \textbf{Office-Home} consists of images across 65 categories found in office and domestic environments, categorized into four domains: the Artistic (Ar) with 2,427 samples, the Clipart (Cl) with 4,365 samples, the Product (Pr) with 4,439 samples, and Real-world (Rw) with 4,357 samples. Consistent with previous baselines, we select six domain permutation cases: R$\rightarrow$P, R$\to$C, A$\to$R, P$\to$R, C$\to$R, and R$\to$A. \textbf{COIL20} includes 1,440 images of 20 different objects. COIL1 and COIL2 are two subsets, each containing images captured from diverse angles. According to \cite{long2014transfer}, a 1,024-dimensional feature vector is extracted for per image, then we construct a retrieval case: COIL1$\to$COIL2. \textbf{MNIST-USPS} are two handwritten datasets including digit images from 0 to 9. Following \cite{long2013transfer}, we resize MNIST to 16$\times$16 pixels and create MNIST-USPS dataset by picking 2,000 MNIST and 1,800 USPS images. We create a retrieval case for experiments: MNIST$\to$USPS.

Random 10\% of target domain samples are selected as the testing set, while the remaining 90\%, along with entire source samples, form the training set. For cross-domain retrieval, queries are expected to match the most similar images in source domain, whereas for single-domain retrieval, target domain is regarded as the retrieved database.

The mean Average Precision (MAP), Top-K Precision Curve, and Precision-Recall Curve are used to measure the hash coding quality. Note that higher values denote better performance for all evaluation metrics. Each trial is repeated 10 times, and we report the average MAP scores (\%). 

\begin{table*}[th]
	\centering   
	\fontsize{9}{9.2}\selectfont
	\setlength{\tabcolsep}{0.9mm}{
		\begin{tabular}{c|ccc|ccc|ccc|ccc|ccc|ccc}
			\hline
			Case    &\multicolumn{3}{c|}{P$\to$R}  & \multicolumn{3}{c|}{C$\to$R} & \multicolumn{3}{c|}{R$\to$A} & \multicolumn{3}{c|}{R$\to$P} & \multicolumn{3}{c|}{R$\to$C} & \multicolumn{3}{c}{A$\to$R} \\ 
			\hline
			Code length  &16& 64&128   &16& 64&128    &16& 64&128   &16& 64&128 &16& 64&128 &16& 64&128 \\ \hline
			SH   & 10.96&15.03 &14.08 & 6.27 &8.77 &7.97 &9.47 &12.87 &11.62 & 11.37 &16.13&15.08 & 5.75 &8.24  &7.68  &10.28 &13.71 &12.30    \\
			DSH   & 5.61 &8.49 &9.79 &3.57 &5.47 &6.55 &5.43 &9.67 &10.54 &5.70 &8.26 &10.20 &3.62 & 5.28 &6.29 &5.95 & 9.69  &11.52  \\
			SGH   &16.68  &24.51 &26.38 &7.22 &13.62 &14.82 &11.92 &22.53 &24.69 &15.85 & 25.73 &27.89 &7.05 &13.51 &14.83 &13.32 &22.93 &25.14  \\
			OCH   &11.52 &18.65 &20.98 &6.15 &10.27 &11.21 &9.45 &17.54 &19.81 &11.18 &20.15 &22.27 &5.95 &10.05 &11.46 &10.30 &18.09  &20.65  \\
			ITQ+  &11.25  &17.61 &17.74 &6.58 &9.55 &9.34 &9.41 &14.25 &15.53 &- &- &-&- &- &-&- &-  &- \\
			LapITQ+  &11.99 &16.89 &16.02 &7.27 &10.37 &10.87 &8.89 &13.56 &13.75 &- &- &-&- &-&-& - &-  & -  \\
			GTH-h   &10.68 &19.80 &22.44 &6.70  &11.41 &12.69 &9.57 &17.54 &19.87 &12.01 &22.21 &23.94 &5.97 &11.63 &13.08 &11.00 &19.94  &21.24  \\
			PWCF   &21.41 &35.44 &35.85 &12.79 &21.97 &10.39 &22.57 &32.20 &31.25 &21.21 &35.51 &35.38 &13.79 &21.96 &20.67 &22.02 &32.63  &31.25  \\
			DAPH*   &30.45 &48.77 &44.93 &12.83 &23.45 &23.87 &29.69 &45.77 &43.18 &25.33 &44.77 &43.32 &18.60 &30.78 &31.83 &26.01 &34.43 &37.97   \\
			SGHL    &27.92  &49.38 &53.72 &18.11 &29.96 &34.34 &22.27 &38.83 &42.81 &28.73 &49.89 &53.78 &16.08 &28.45 &30.37 &20.01 &35.49  &39.13  \\
			TSS     &9.52 &48.24 &61.41 &6.64 &39.59 &50.90 &10.74 &38.28 &49.11 &10.61 &50.65 &62.58 &5.96 &24.54 &\underline{32.60} &12.57 &47.72 &57.11   \\
			DCS-LSG  &\underline{47.76} & \underline{69.72} &\underline{70.21} &\underline{37.39} &\underline{57.36} &\underline{61.59} &\underline{33.42} & \underline{48.81} &\underline{50.06} &\underline{54.53} &\underline{68.00} &\underline{70.27} &\underline{19.28}& \underline{30.45} &31.48 &\underline{47.52} &\underline{67.81} &\underline{67.73} \\ 
			\hline
			Ours  & \textbf{55.84}  & \textbf{76.06} &\textbf{78.15} &\textbf{47.59} & \textbf{68.61} &\textbf{68.82} &\textbf{44.84} & \textbf{64.02} &\textbf{68.77} &\textbf{58.83} & \textbf{73.04} &\textbf{75.06} &\textbf{28.34} & \textbf{39.65} &\textbf{42.87} &\textbf{54.17} & \textbf{72.78}  &\textbf{74.81}  \\ 
			\hline             
		\end{tabular}
	}
	\caption{Cross-domain retrieval performance (MAP\%) on Office-Home with varying code lengths.}
	\label{table2}
\end{table*}
\subsection{Baseline and Implementation Detail}
We select several state-of-the-art methods as baselines for comparison with PSCA: SH \cite{weiss2008spectral}, DSH \cite{liu2016deep}, SGH \cite{jiang2015scalable}, ITQ+ \cite{zhou2018transfer}, LapITQ+ \cite{zhou2018transfer}, GTH-h \cite{zhang2019optimal}, PWCF \cite{huang2020probability}, DAPH* \cite{huang2021domain}, SGHL \cite{zhang2023semantic}, TSS \cite{chen2023two}, and DCS-LSG \cite{zhang2024dynamic}. Where SH, DSH, and SGH belong to traditional hashing. ITQ+, LapITQ+, and GTH-h are transfer hashing. PWCF, DAPH*, SGHL, TSS, and DCS-LSG are hashing methods which aim at dealing with cross-domain scenarios. Note that DAPH* is the supervised variant of DAPH. ITQ+ and LapITQ+ achieve effective retrieval only when the source domain contains more samples than the target domain. Thus in some cases, their MAP scores are not reported. Additionally, we further compare with three advanced deep DAR baselines, i.e., PEACE \cite{wang2023toward}, CPH \cite{cui2024effective} and COUPLE \cite{luo2025cross}.

The proposed PSCA includes three independent parameters, i.e., $\lambda_1$ to $\lambda_3$, each of them controls the penalty weight of different objectives. We empirically explore the optimal combination of them by fixing one and adjust other parameters: within the range of [1, 10, 100] for $\lambda_1$ and $\lambda_3$, range [0.1, 1, 10] for $\lambda_2$. The specific parameters depend on the characteristics of datasets. $\beta$ is fixed as 0.1 and $\sigma$ is set to 2.

\begin{figure}[t]
	\centering		
	\begin{center}
		\centering
		\subfigure
		{
			\includegraphics[width=0.175\textwidth,trim={9mm 0mm 15mm 1.5mm},clip]{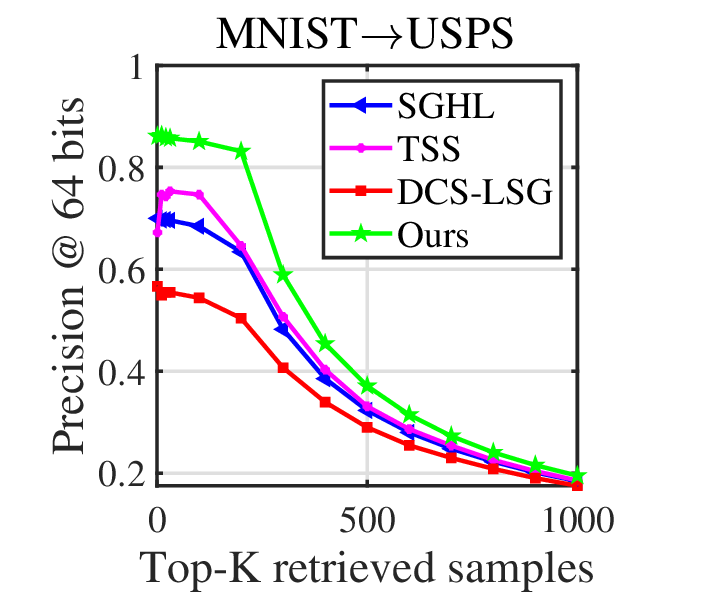}}
		{
			\includegraphics[width=0.175\textwidth,trim={9mm 0mm 15mm 1.5mm},clip]{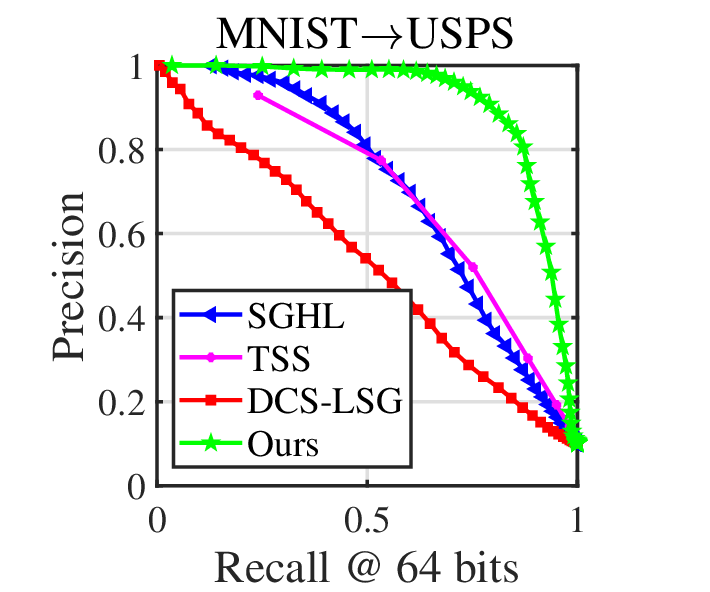}
			}
		\subfigure	
		{
			\includegraphics[width=0.175\textwidth,trim={9mm 0mm 15mm 1.5mm},clip]{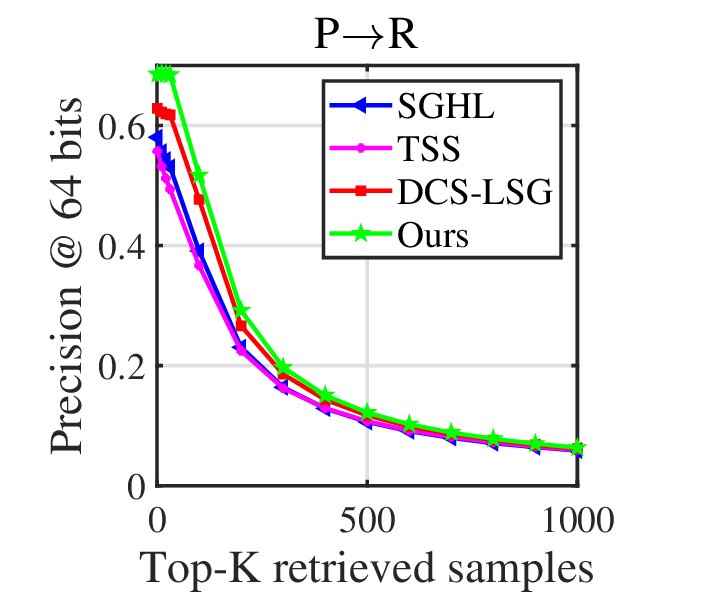}}
		{
			\includegraphics[width=0.175\textwidth,trim={9mm 0mm 15mm 1.5mm},clip]{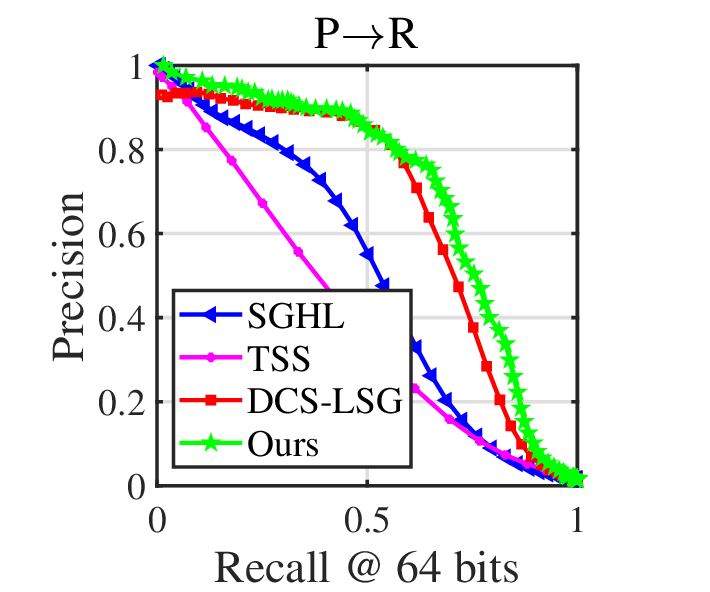}}
		\subfigure
		{
			\includegraphics[width=0.175\textwidth,trim={9mm 0mm 15mm 1.5mm},clip]{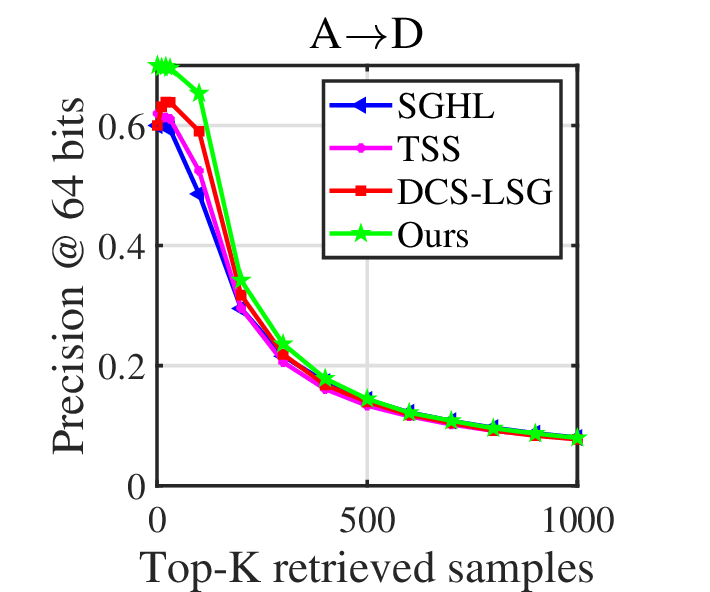}}
		{
			\includegraphics[width=0.175\textwidth,trim={9mm 0mm 15mm 1.5mm},clip]{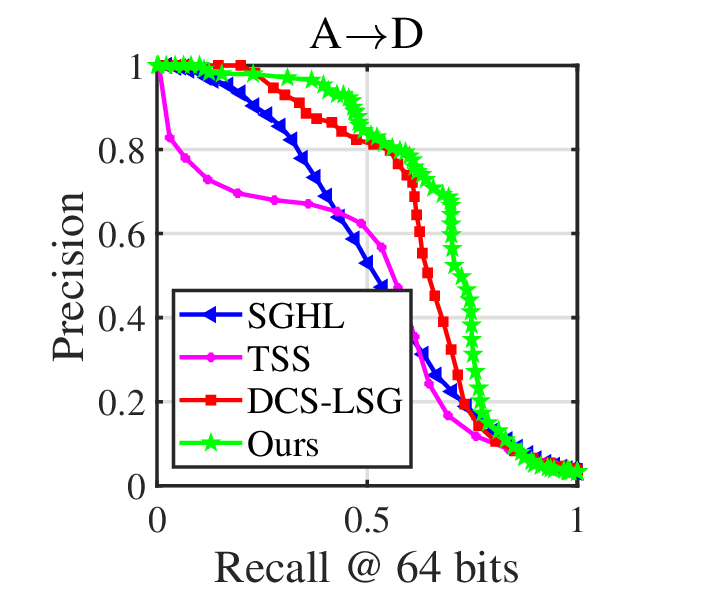}}
	\end{center}
	\caption{The Precision-Recall Curves and Top-K Precision Curves of SGHL, TSS, DCS-LSG and PSCA.}	\label{TOPK}
\end{figure}

\subsection{Experimental Analysis on Cross-Domain Retrieval}
To evaluate PSCA's cross-domain retrieval performance, we conduct comparison experiments with all baselines across varying code lengths. Tables \ref{table1} and \ref{table2} report the MAP results. By analyzing these tables, we derive the following observations: comprehensive experiments across four datasets demonstrate  that PSCA consistently outperforms all baselines across different code lengths. The presented average MAP of PSCA exceeds that of the second-best baselines by 17.21\%, 3.94\%, 4.08\% and 7.33\% on cases MNIST$\to$USPS, COIL1$\to$COIL2, A$\to$D and A$\to$W respectively. On Office-Home, PSCA achieves a remarkable performance improvement by 8.82\% on average across six retrieval cases. Based on the above analyses, PSCA exhibits consistent competitive advantages over other baselines. We conclude that PSCA performs well in handling cross-domain issues, whether on small-scale (MNIST-USPS, COIL20), medium-scale (Office-31), or large-scale datasets (Office-Home). To further evaluate the performance on cross-domain retrieval, we plot the Top-K Precision and Precision-Recall curves of PSCA and three most recent competitive methods, i.e., SGHL, TSS and DCS-LSG. Specifically, experiments are conducted on three retrieval tasks across datasets of different scales. As shown in Figure \ref{TOPK}, it can be observed from the Top-K Precision Curves that PSCA consistently maintains advantages as the number of retrieved samples increases. The Precision-Recall Curves demonstrate that PSCA outperforms the comparison baselines.

\subsection{Experimental Analysis on Single-Domain Retrieval}
To evaluate the effectiveness of PSCA in the realm of single-domain retrieval, four representative cases (MNIST$\to$USPS, COIL1$\to$COIL2, A$\to$D, P$\to$R) are selected for experiments. We can conclude the following observations by analyzing Table \ref{table3}: PSCA outperforms comparison baselines with the average MAP 6.81\%, 5.39\% and 12.55\% higher than the second-best baselines on MNIST$\to$USPS, COIL1$\to$COIL2 and P$\to$R respectively. Notably, PSCA shows the least performance improvement on A$\to$D by 2.25\%. This may be due to the fact that our domain-shared prototypes over-smooth reconstructed target domain features, hindering optimal single-domain retrieval.

\begin{table*}[t]
	\centering
	\fontsize{9}{9.2}\selectfont
	\setlength{\tabcolsep}{1.4mm}
	\begin{tabular}{c|cccc|cccc|cccc|cccc}
		\hline
		Case& \multicolumn{4}{c|}{MNIST$\rightarrow$USPS}   & \multicolumn{4}{c|}{COIL1$\rightarrow$COIL2} &
		\multicolumn{4}{c|}{A$\to$D}   &  \multicolumn{4}{c}{P$\to$R}                     \\ \hline
		Code length    & 16    & 32      & 64       & 128   & 16          & 32            & 64       & 128  &16          & 32          & 64     & 128 &16          & 32          & 64     & 128  \\ \hline
		SH      & 46.30 & 47.82  & 49.12 & 47.81 & 52.91          & 57.07 & 57.23 & 52.61 & 30.54 & 35.66  & 42.50  & 45.64 &13.15 &18.71 &22.57 &20.66 \\
		DSH     & 41.42 & 45.30& 47.85 & 50.76 & 43.44       & 52.85      & 58.06  & 58.84 & 22.45 & 33.38 & 40.09  & 46.31 &6.10 &11.44 &16.61 &14.45\\
		SGH     & 15.60 & 30.78 & 35.55  & 41.78 & 54.30   & 59.25 & 59.97 & 58.49 & 38.67 &45.59 & 53.57 & \underline{57.37} &18.97 &26.18 &32.61 &34.97\\
		OCH     & 24.23 & 32.90 & 36.34 & 44.36 & 54.24   & 61.08 & 65.56 & 65.98 & 33.30 & 41.65& 50.78 & 53.74 &13.45 &21.14 &25.34 &28.02\\
		ITQ+    & 50.22 & 49.66 & 44.38 & 43.21 & 58.74 & 60.53  & 61.86 & 60.94 & 35.03 & 42.62 & 43.12 & 39.12 &15.60 &20.60 &24.96 &24.05\\
		LapITQ+ & 54.19 & 55.24 & 55.77 & 54.08 & 53.05 & 48.90  & 40.92 & 34.58 & 37.60 & 42.91 & 44.55  & 38.87 &16.78 &22.26 &22.29 &21.85\\
		GTH-h    & 43.38 & 40.09 & 34.14 & 32.80 & 58.84 & 59.65 & 63.68 & 63.71 & 39.88 & 46.60 & 50.74 & 54.72  &13.37 &22.03 &26.40  &28.99\\
		PWCF    & 57.57 & 64.00 & 65.45 & 65.63 & 69.36 & 70.81 & 72.41 & 70.43 & 24.01 &31.25 &38.65 &43.35 &23.84 &33.83 &38.26 &37.91\\
		DAPH* &{63.95} &{70.53} &{72.38} &{73.00} &{71.93} &{74.87}&{75.35}  &{75.54} &
		{42.18} &
		{46.86} &
		{47.73} &
		{50.26} &20.43 &31.02 &33.30 &31.50\\
		SGHL&64.25 &69.64 & 70.97 & 71.22& 72.37&74.24  &76.67 &78.93&\underline{46.40} &\underline{53.66} &\underline{58.05}  &{57.30} &16.13 &29.04 &36.49 &40.75\\
		TSS &
		\underline{69.55} &
		\underline{74.15} &
		\underline{77.26} &
		\underline {78.13} &
		67.20 &
		{76.63} &
		{76.53} &
		\underline{82.84} &
		{27.47} &
		{43.88} &
		{54.07} &
		{51.66} &15.67 &29.92 &45.07 &\underline{53.97}\\ 
		DCS-LSG& 56.91&63.30 &61.55 &62.06 &\underline{73.77} &\underline{77.07} &\underline{77.93} &79.77 &42.46 &47.26 &51.53 &51.12 &\underline{33.45} &\underline{44.13} &\underline{48.50} & 50.62\\\hline
		Ours &
		\textbf{80.61} &
		\textbf{81.09} &
		\textbf{81.53} &
		\textbf{83.07} &
		\textbf{78.90} &
		\textbf{80.69} &
		\textbf{83.54} &
		\textbf{86.00} &
		\textbf{48.83} &
		\textbf{55.91} &
		\textbf{59.54} &
		\textbf{60.11}  & \textbf{45.70}&\textbf{55.55}  &\textbf{60.13} &\textbf{62.85}\\
		\hline
	\end{tabular}
	\caption{Single-domain retrieval performance (MAP\%) on MNIST$\rightarrow$USPS, COIL1$\rightarrow$COIL2, A$\rightarrow$D and P$\rightarrow$R with varying code lengths. The bolded figure indicates the highest score, and underlined figure indicates the second-highest score.}
	\label{table3}
\end{table*}

\subsection{Comparison with Deep Baseline}
To further validate the superiority of PSCA, we compare it with three advanced deep learning-based DAR methods, i.e., PEACE \cite{wang2023toward}, CPH \cite{cui2024effective} and COUPLE \cite{luo2025cross}. The comparison results are illustrated in Figure \ref{deep}. As shown in Figure \ref{d1}, PSCA markedly outperforms other competitors on MNIST-USPS, boosting the performance by 15.89\% compared to the second-best method, COUPLE. Note that PSCA is based on traditional machine learning principles and adopts shallow features for this experiment. Figure \ref{d2} shows the comparison results on Office-Home. Here, the 4,096-d deep features utilized for PSCA are extracted from a pre-trained VGG-16 model \cite{huang2020probability}. On average across six cases, PSCA surpasses the suboptimal baseline CPH by 1.98\%. This improvement stems from PSCA's capacity to address multiple limitations of other methods: CPH neglects error accumulation caused by erroneous pseudo-labels. PEACE and COUPLE perform domain alignment within Hamming space, where dimension is typically much smaller than original data, resulting in limited semantic information preservation.

\begin{figure}[t]
	\centering
	\subfigure[MNIST-USPS]{\label{d1}
		\includegraphics[width=0.224\textwidth,trim={0.5mm 3.8mm 9.5mm 4.5mm},clip]{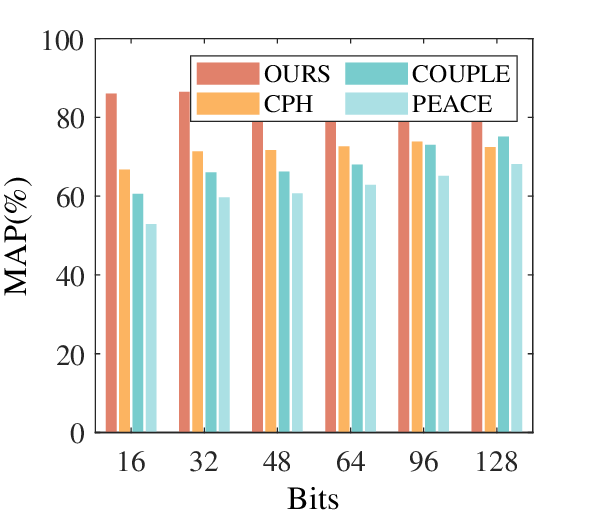}
		}
	\subfigure[Office-Home]{\label{d2}
	\includegraphics[width=0.224\textwidth,trim={0.5mm 3.8mm 9.5mm 4.5mm},clip]{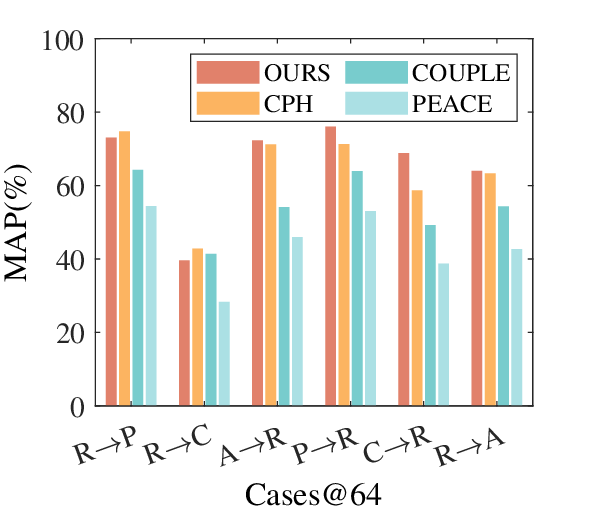}
		}
	\caption{Comparison with deep methods on MNIST-USPS across all code lengths and Office-Home with 64 bits.}
	\label{deep}
\end{figure}
\begin{table}[t]
	\centering
	\fontsize{9}{9.2}\selectfont
	\setlength{\tabcolsep}{1.4mm}
	{
		\begin{tabular}{c|cccccc}
			\hline
			Case&\multicolumn{6}{c}{MNIST$\rightarrow$USPS}  \\ \cline{1-7}
			Code length & 16 & 32 & 48 & 64 & 96 & 128   \\ \hline
			PSCA-v1      &  58.74 & 59.79 & 60.09  & 61.68 & 62.41 & 61.38 \\
			PSCA-v2    & \underline{77.11} & \underline{79.09}  &  \underline{81.89} & \underline{82.77} & \underline{82.95} &  \underline{83.24}     \\
			PSCA-v3     & 38.25& 40.26 & 42.08 & 43.19 & 43.40 & 44.56      \\
			PSCA-v4     & 76.35 & 78.76 & 79.31 & 79.87 & 80.10  & 80.92      \\\hline
			PSCA   & \textbf{86.05}& \textbf{86.47}& \textbf{87.04}& \textbf{87.35}&\textbf{88.09}& \textbf{88.71}   \\
			\hline
		\end{tabular} 
	}
	\caption{Ablation study results on MNIST-USPS.}
	\label{table4}
\end{table}

\subsection{Ablation Study}
To clearly highlight the significance of each component in PSCA, several variants are designed. \textbf{PSCA-v1} denotes that the semantic-aware fusion is removed, i.e., $\boldsymbol{\alpha}=\mathbf{0}$. This means that $\mathbf{R}$ is dominated by geometric structure knowledge. \textbf{PSCA-v2} denotes that we omit the semantic consistency alignment component, solely using Eq. \eqref{central_similarity} to conduct semantic alignment. For validity, we reconstruct target samples as $\mathbf{\widetilde{x}}_{t_i} = \sum_{m}^{c}{\hat{y}}_{im}\mathbf{o}_{m}^\top$.
\textbf{PSCA-v3} means that prototype learning is completely removed, subsequently we fuse samples as $\mathbf{D}_*=[\mathbf{X},\mathbf{X}^\top\mathbf{P}]^\top\in\mathbb{R}^{\mathcal{C}_*\times n}$, where $\mathcal{C}_* = d+q$. \textbf{PSCA-v4} indicates that feature reconstruction $\mathbf{D}$ is omitted, directly quantizing $\mathbf{P}^\top\mathbf{X}_s$ and $\mathbf{P}^\top\mathbf{{X}}_t$ in stage two. The results of ablation experiments are reported in Table \ref{table4}. It's evident that all components contribute to performance improvements of PSCA. The inferior performance of PSCA-v1 and PSCA-v3 demonstrates that prototypes indeed capture fundamental correct semantic patterns, validating the rationality of our geometric proximity for semantic correction.

\subsection{Visualization Analysis}
Figure \ref{TSNE} presents the t-SNE analyses of original data and hash codes generated by TSS, DCS-LSG and PSCA on MNIST$\rightarrow$USPS. As shown in Figure \ref{SNEa}, affected by domain gap, original data with the identical class generally distribute in two different areas. Compared to other baselines, PSCA achieves tighter intra-class clustering and clearer inter-class boundaries, demonstrating the effectiveness of our PSCA in generating more discriminative hash codes.

\begin{figure}[t]
	\centering
	\begin{center}
		\centering
		\subfigure[Original data]{
			\includegraphics[width=0.2\textwidth,trim={5mm 2.5mm 9.5mm 6mm},clip]{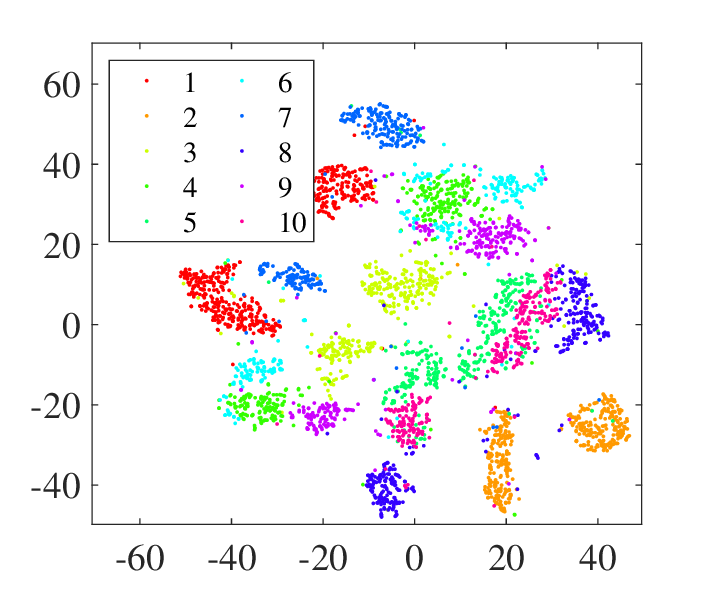}
			\label{SNEa}}
		\subfigure[TSS]{
			\includegraphics[width=0.2\textwidth,trim={5mm 2.5mm 9.5mm 6mm},clip]{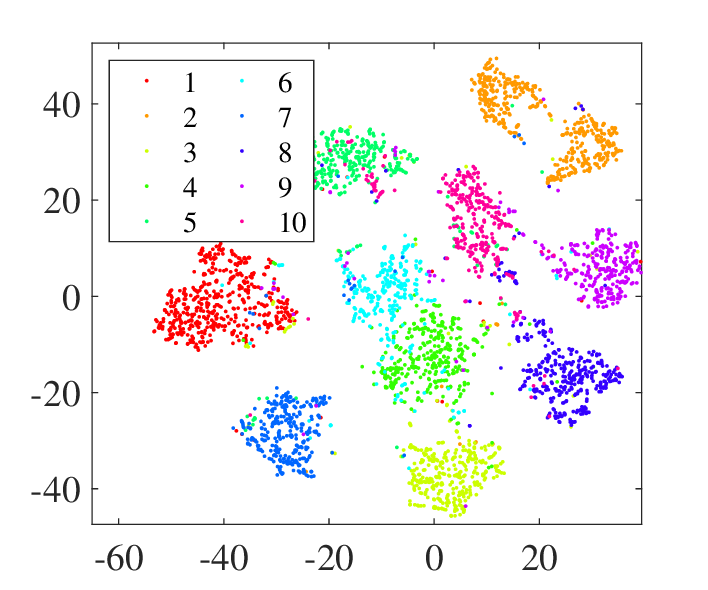}
			\label{SNEb}}
		\subfigure[DCS-LSG]{
			\includegraphics[width=0.2\textwidth,trim={5mm 2.5mm 9.5mm 6mm},clip]{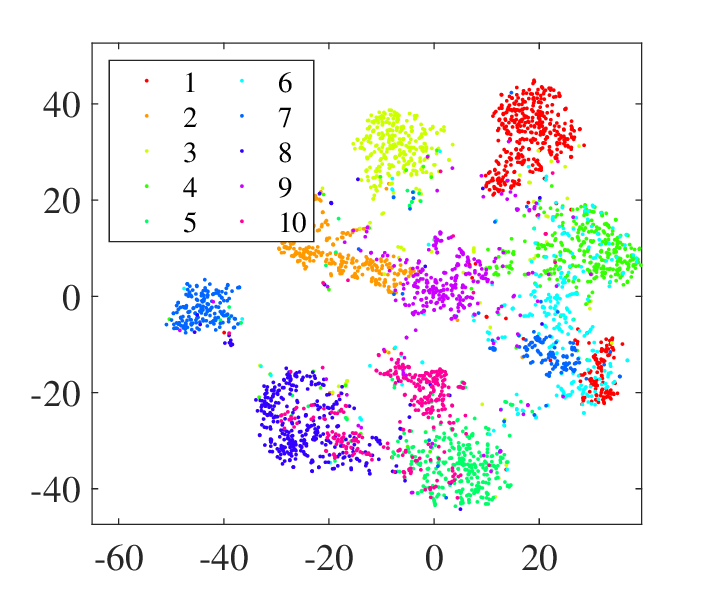}
			\label{SNEc}}
		\subfigure[PSCA]{
			\includegraphics[width=0.2\textwidth,trim={5mm 2.5mm 9.5mm 6mm},clip]{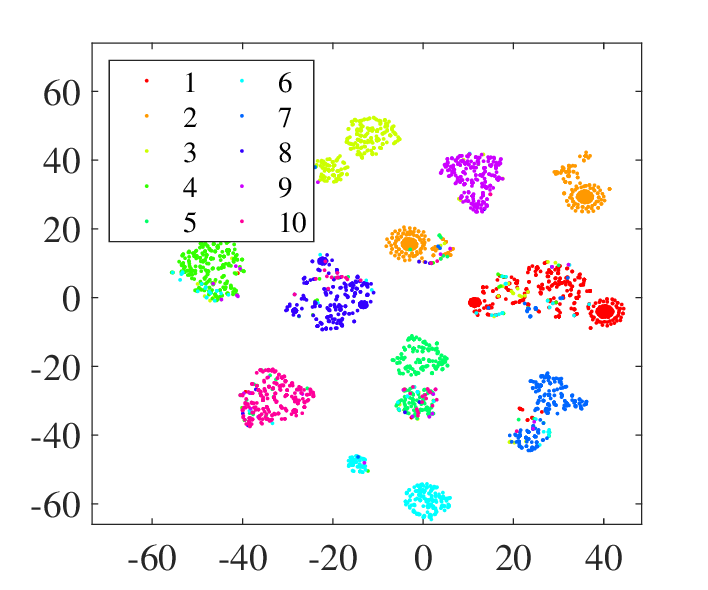}
			\label{SNEd}}
		\caption{Distribution visualization on case MNIST$\rightarrow$USPS. Different colors indicate different categories.}
		\label{TSNE}
	\end{center}
\end{figure}

\section{Conclusion}
While admiring the merits of previous methods, we identify their limitations and propose PSCA. Despite deviations in domain-shared class prototypes caused by incorrect pseudo-labels, they still capture the vast majority of accurate semantic patterns. Towards fully utilizing them, PSCA conducts orthogonal prototype learning and introduces the geometric proximity to weight potentially unreliable semantics via a novel semantic consistency alignment. By feature reconstruction, PSCA avoids directly quantizing original features affected by domain shift, achieving superior hash coding with remarkable retrieval performance. Comprehensive experiments validate that PSCA achieves SOTA performance.

\section{Acknowledgments}
This work was supported by the National Natural Science Foundation of China under Grant 62176065.

\bibliographystyle{aaai2026bst}
\bibliography{aaai2026}

\clearpage
\appendix
\appendixpage
\begin{abstract}
	The supplementary material firstly presents the detailed process of pseudo-labeling, secondly gives the optimization procedure of PSCA. Thirdly, an algorithm analysis subsection is provided, including convergence analysis, computational complexity and running time comparison. Finally, parameter sensitivity analysis is reported.
\end{abstract}

\section{Proposed Method}
\subsection{Pseudo-Labeling Strategy}
Pseudo-labeling serves as a widely adopted strategy in unsupervised hashing techniques \cite{chen2023two,zhang2024dynamic}. By inferring the latent correspondences between unlabeled samples and semantic categories, it enables the generation of comprehensively annotated datasets for downstream machine learning applications.

In this study, we assign pseudo-labels via Nearest Category Prototype (NCP) \cite{chen2019progressive} and Structured Prediction (SP) \cite{wang2017zero}. NCP calculates the $\ell_2$-norm distance between $\mathbf{x}_{t}$ and class centers clustered from source domain features and labels, thereby computing its conditional probability of belonging to the $j$-th category:
\begin{equation}
	\label{NCP}
	{p_{1}(j|\mathbf{x}_t)=\frac{\exp\left(-\left\|\mathbf{P}^\top\mathbf{x}_t-\mathbf{P}^\top\mathbf{\hat{x}}_{s_j}\right\|_2^2\right)}{\sum\limits_{j=1}^c\exp\left(-\left\|\mathbf{P}^\top\mathbf{x}_t-\mathbf{P}^\top\mathbf{\hat{x}}_{s_j}\right\|_2^2\right)}}
\end{equation}
where ${\mathbf{\hat{x}}_{s_j}}$ denotes the source center of the ${j}$-th class. By contrast, SP pseudo-labels by measuring target samples that are close to the corresponding cluster centers of target domain. The conditional probability can be calculated by:
\begin{equation}
	\label{SP}
	{p_{2}(j|\mathbf{x}_t)=\frac{\exp\left(-\left\|\mathbf{P}^\top\mathbf{x}_t-\mathbf{P}^\top\mathbf{\hat{x}}_{t_j}\right\|_2^2\right)}{\sum\limits_{j=1}^c\exp\left(-\left\|\mathbf{P}^\top\mathbf{x}_t-\mathbf{P}^\top\mathbf{\hat{x}}_{t_j}\right\|_2^2\right)}}
\end{equation}
where $\mathbf{\hat{x}}_{t_j}$ is the target cluster center of ${j}$-th class. According to Selective Pseudo-Labeling (SPL) \cite{wang2020unsupervised}, pseudo-label $\mathbf{\hat{y}}_{t_i}$ for each $\mathbf{x}_{t_i}$ is obtained by leveraging the complementarity of above two mechanisms:
\begin{equation}
	\label{SPL1}
	\pi_{ij} = \max\{p_1(j|\mathbf{x}_{t_i}), p_2(j|\mathbf{x}_{t_i})\}, \quad \forall j \in \{1,\ldots,c\}
\end{equation}
\begin{equation}
	\label{SPL3}
	\mathbf{\hat{y}}_{t_i} = \arg\max_{j}\{{\pi}_{ij}\}
\end{equation}
To facilitate the semantic consistency alignment, we sort the probability sequence as $\pi_{\text{sort}}^{(i,1)}\geq \pi_{\text{sort}}^{(i,2)} \geq \cdots \geq \pi_{\text{sort}}^{(i,c)}$ for subsequent confidence margin computation.
\section{Optimization}
\subsection{Solution Process}
In this subsection, we give detailed solving process of proposed PSCA. The overall objective optimization functions of both stages are given as follows:
\begin{equation}
	\begin{aligned}
		\label{Stage_One1}
		&\min_{\mathbf{P},\mathbf{O}^{\top}\mathbf{O}=\mathbf{I}_{c}}\sum_{i=1}^{n}\sum_{j=1}^{c}{\widetilde{y}}_{ij}\|\mathbf{P}^{\top}\mathbf{x}_{i}-\mathbf{o}_{j}\|_{2}^{2}\\&
		+{\lambda_1\mathrm{Tr}({\mathbf{P}^\top}\mathbf{X}\mathbf{H}{\mathbf{X}^\top}\mathbf{P})}+\lambda_2\|\mathbf{P}\|_{2,1}
	\end{aligned}
\end{equation}
and
\begin{equation}
	\begin{aligned}
		\label{Stage_Two2}
		&\begin{aligned}
			\min_{\mathbf{W}_s,\mathbf{W}_t,\mathbf{B}_s,\mathbf{B}_t}&\|\mathbf{W}_s\mathbf{D}_{s}-\mathbf{B}_{s}\|_{F}^{2}+\|\mathbf{W}_t\mathbf{D}_{t}-\mathbf{B}_{t}\|_{F}^{2}\\&
			+\lambda_3\|\mathbf{W}_s-\mathbf{W}_{t}\|_{F}^{2}
		\end{aligned}\\
		&\text{s.t.}~\mathbf{W}_s{\mathbf{W}_s}^{\top}=\mathbf{I}_{{r}},\mathbf{W}_t{\mathbf{W}_t}^{\top}=\mathbf{I}_{{r}},\mathbf{B}_s,\mathbf{B}_t \in \{-1,1 \}^{r\times *},
	\end{aligned}
\end{equation}
where $*\in \{n_s,n_t \} $. Eqs. \eqref{Stage_One1} and \eqref{Stage_Two2} contain multiple variables which are nontrivial to be solved simultaneously, a feasible solution is developing an alternating strategy to transform them into a series of convex subproblems. The detailed updating schemes of optimization are presented as:

\textit{\textbf{Optimization Procedure of Solving Stage One:}}

\textbf{{Update $\mathbf{P}$:}} In general, the dimension of common projected subspace is much smaller than the original data's, i.e, $q<<d$. Thus, the $\ell_{2,1}$-norm $\|\mathbf{P}\|_{2,1}$ is a regularization term with row-sparsity constraint, which aims to filter out certain redundant information in $\mathbf{X}$ by forcing most of unimportant rows in $\mathbf{P}$ shrink to zeros. It is defined as:
\begin{equation}
	\begin{aligned}
		\label{21_norm}
		\|\mathbf{P}\|_{2,1} = \sum_{i=1}^{d}\sqrt{\sum_{j=1}^{q}({p}_{ij})^2} = \sum_{i=1}^{d}\|\mathbf{p}_i\|_2
	\end{aligned}
\end{equation}
where $\mathbf{p}_i$ is the $i$-th row of $\mathbf{P}$. Since $\|\mathbf{P}\|_{2,1}$ is a non-smooth function, thus we replace it into a trace form for subsequently deriving its closed solution \cite{nie2010efficient}:
\begin{equation}
	\begin{aligned}
		\label{21_norm_tr}
		\|\mathbf{P}\|_{2,1} {~\Leftrightarrow~ }{\mathrm{Tr}({\mathbf{P}^\top}\mathbf{A}\mathbf{P})}
	\end{aligned}
\end{equation}
where $\mathbf{A}$ denotes a diagonal sub-gradient matrix, and its elements are computed by:
\begin{equation}
	\begin{aligned}
		\label{21_norm_d}
		a_{ii} = \frac{1}{2\|\mathbf{p}_i\|^2+\text{eps}}
	\end{aligned}
\end{equation}
where $\text{eps}$ is a small constant that does not affect the result. For the first term in Eq. \eqref{Stage_One}, optimization problem can be written into trace form as below:
\begin{equation}
	\begin{aligned}
		\label{Tr}
		&\sum_{i=1}^{n}\sum_{j=1}^{c}{\widetilde{y}}_{ij}\|\mathbf{P}^{\top}\mathbf{x}_{i}-\mathbf{o}_{j}\|_{2}^{2}
		\\&=\mathrm{Tr}\big(\mathbf{P^\top{X}\mathbf{S}_{1}{X}^\top{P}}+\mathbf{O}^\top\mathbf{S}_{2}\mathbf{O}-2{\mathbf{P}^\top}\mathbf{X}\mathbf{{\widetilde{Y}}}\mathbf{O}^\top\big)
	\end{aligned}
\end{equation}
where $\mathbf{S}_{1}\in\mathbb{R}^{n\times n}$ and $\mathbf{S}_{2}\in\mathbb{R}^{c\times c}$ represent two degree matrices with diagonal elements calculated as ${\mathbf{s}_{1}}_{ii}=\sum_{i=1}^{n}\widetilde{y}_{ij}$ and ${\mathbf{s}_{2}}_{ii}=\sum_{j=1}^{c}\widetilde{y}_{ij}$, respectively. 

Combining above derivations with removing irrelevant terms, we reach the subproblem in trace form $w.r.t$ $\mathbf{P}$ as:
\begin{equation}
	\begin{aligned}
		\label{P_step1}
		&\min_{\mathbf{P}}\mathrm{Tr}\big(\mathbf{P^\top{X}\mathbf{S}_{1}{X}^\top{P}}-2{\mathbf{P}^\top}\mathbf{X}\mathbf{{\widetilde{Y}}}\mathbf{O}^\top
		\\&+\lambda_1{\mathbf{P}^\top}\mathbf{X}\mathbf{H}{\mathbf{X}^\top}\mathbf{P}+{\lambda_2{\mathbf{P}^\top}\mathbf{A}\mathbf{P}}\big)
	\end{aligned}
\end{equation}
By setting the derivative of Eq. \eqref{P_step1} $w.r.t$ $\mathbf{P}$ equals to zero, we have corresponding optimal solution of $\mathbf{P}$:
\begin{equation}
	\begin{aligned}
		\label{P_step2}
		{\mathbf{P}}=(\lambda_1\mathbf{X}\mathbf{H}{\mathbf{X}^\top}+\mathbf{X}\mathbf{S}_{1}\mathbf{X}^\top+\lambda_2\mathbf{A})^{-1}\mathbf{X}\mathbf{{\widetilde{Y}}}\mathbf{O}^\top
	\end{aligned}
\end{equation}

\textbf{{Update $\mathbf{{R}}$:}} $\mathbf{{R}}$ is introduced to measure geometric consistency and the relevant objective function is given as follows: 
\begin{equation}
	\begin{aligned}
		\label{R_1}
		\min_{\mathbf{R} \geq 0, \mathbf{R}\mathbf{1}_c = \mathbf{1}_{n_t}}\sum_{i=1}^{n_t}\sum_{j=1}^{c}\{{{r}}_{ij}^{\sigma}d_{ij}-{\psi}_{ij}\text{log}({r}_{ij})\}
	\end{aligned}
\end{equation}
where $d_{ij}$ denotes the calculated value of $\ell_{2}$-distance that $d_{ij}=\|\mathbf{P}^{\top}\mathbf{x}_{t_i}-\mathbf{o}_{j}\|^2_{2}$. Let $L(\mathbf{R})$ be the partial derivative of Eq. \eqref{R_1} $w.r.t$ $\mathbf{R}$. By setting it to 0, we have:
\begin{equation}
	L(\mathbf{R})= \sigma r_{ij}^{(\sigma-1)}d_{ij}- \frac{{\psi}_{ij}}{r_{ij}\text{ln}(2)}
\end{equation}
Then the first step of optimizing $\mathbf{R}$ is pushed forward through the gradient descent:
\begin{equation}
	\begin{aligned}
		r_{ij}^{*} = r_{ij}^{(t)} - \eta {L}(\mathbf{R})
	\end{aligned}
\end{equation}
Here, $\eta$ is a learning rate and $r_{ij}^{*}$ serves as an intermediate representation. Considering the constraint conditions $\mathbf{R} \geq 0, \mathbf{R}\mathbf{1}_c = \mathbf{1}_{n_t}$, the updating of $\mathbf{R}$ can be futher processed by defining each row of the desired $\mathbf{R}^{(t+1)}$ as $\mathbf{r}_i^{(t+1)}$, then the approximate solution can be derived through:
\begin{equation}
	\begin{aligned}\label{R_2}
		\min_{\mathbf{r}_i} \frac{1}{2}\|\mathbf{r}_i^{(t+1)}  - \mathbf{r}^{*}_i\|_2^2, \quad \text{s.t.}~\mathbf{r}_i \geq 0, \mathbf{r}_i \mathbf{1}_c = \mathbf{1}_{n_t}.
	\end{aligned}
\end{equation}
Eq. \eqref{R_2} can be solved via off-the-shelf QP tools or exploiting the strategy mentioned in \cite{feng2020regularized}.

\textbf{{Update $\mathbf{{O}}$:}} According to the trace form $w.r.t$ $\mathbf{{O}}$ shown in Eq. \eqref{Tr}. Subject to the orthogonal constraint, the subproblem of  $\mathbf{O}$ can be simplified with fixing other variables as:
\begin{equation}
	\begin{aligned}
		\label{O1}
		\min_{\mathbf{O}^{\top}\mathbf{O}=\mathbf{I}_{c}} \mathrm{Tr}\big(\mathbf{O}^{\top}\mathbf{S}_{2}\mathbf{O}-2{\mathbf{P}^\top}\mathbf{X}\mathbf{{\widetilde{Y}}}\mathbf{O}^\top\big)
	\end{aligned}
\end{equation}
To facilitate optimization, a relaxed version is utilized to solve Eq. \eqref{O1}. Concretely, we firstly set the partial derivative $w.r.t$ $\mathbf{O}$ equals to 0, and then adopt the Singular Value Decomposition (SVD) technique \cite{wang2019multi}:
\begin{equation}
	\mathbf{\hat{M}\Gamma_1{M}^\top}=\mathbf{P^\top}\mathbf{X}\mathbf{\widetilde{Y}}(\mathbf{S}_{2})^{-1}
\end{equation}
where $\mathbf{\Gamma_1} = \text{diag}(\gamma_1, \gamma_2,...,\gamma_z)$ indicates the singular value matrix, $z \leq \min(q,c)$, $\mathbf{\hat{M}}\in\mathbb{R}^{q\times k}$ and $\mathbf{M}\in\mathbb{R}^{k\times c}$ serve as the transformation matrices. Evidently, the optimal solution of $\mathbf{O}$ is obtained through:
\begin{equation}\label{O_Step}
	{\mathbf{O}}=\mathbf{\hat{M}{M}^\top}
\end{equation}

\textit{\textbf{Optimization Procedure of Solving Stage Two:}}

\textbf{{Update $\mathbf{B}_{s}$:}}
The discrete constraints $\mathbf{B}_s,\mathbf{B}_t \in \{-1,1 \}$ imposed on Eq. \eqref{Stage_Two} lead to the problem NP-hard. Previous baselines \cite{zhang2023semantic,zhang2024dynamic} optimize a relaxed version with discarding these constraints, then using a $sign$ function discretize the outputs. This may result in enlarging quantization errors and bias of generated binary codes \cite{liu2024fast}. In this study, we adopt optimization technique with Discrete Cyclic Coordinate (DCC) \cite{shen2015supervised} method to enhance the training efficiency. Let the $i$-th row of $\mathbf{B}_{s}$, $\mathbf{W}_{s}$ and ${\mathbf{D}_{s}}^{\top}$ as $\mathbf{q}_{1}$, $\mathbf{w}_{1}$ and $\mathbf{d}_{1}$. Then we optimize each of the $r$ bits and derive the closed-form solution of $\mathbf{B}_{s}$:
\begin{equation}\label{bs}
	\mathbf{q}_{1}= \text{sgn}(\mathbf{w}_{1}\mathbf{d}_{1})
\end{equation}

\textbf{Update $\mathbf{B}_{t}$:} 
Similar to update $\mathbf{B}_{s}$, we optimize $\mathbf{B}_{t}$ by the following objective function:
\begin{equation}\label{bt}
	\mathbf{q}_{2}= \text{sgn}(\mathbf{w}_{2}\mathbf{d}_{2})
\end{equation}
where $\mathbf{q}_{2}$, $\mathbf{w}_{2}$ and $\mathbf{d}_{2}$ denote the $i$-th row of $\mathbf{B}_{t}$, $\mathbf{W}_{t}$ and ${\mathbf{D}_{t}}^{\top}$ respectively.

\textbf{Update $\mathbf{W}_{s}$:} By fixing the irrelevant variables, Eq. \eqref{Stage_Two} can be simplified as:
\begin{equation}
	\begin{aligned}
		\label{Ws}
		\min_{\mathbf{W}_s{\mathbf{W}_s}^{\top}=\mathbf{I}_{{r}}}\|\mathbf{W}_s\mathbf{D}_{s}-\mathbf{B}_{s}\|_{F}^{2}+\lambda_3\|\mathbf{W}_s-\mathbf{W}_{t}\|_{F}^{2}
	\end{aligned}
\end{equation}
Let $L({\mathbf{W}_s})$ be the partial derivative of Eq. \eqref{Ws} $w.r.t$ ${\mathbf{W}_s}$, $L({\mathbf{W}_s})$ is represented as below:
\begin{equation}
	\begin{aligned}
		\label{Ws1}
		L({\mathbf{W}_s})= ({\mathbf{B}_s}{\mathbf{D}_s}^{\top}+\lambda_3{\mathbf{W}_t})({\mathbf{D}_s}{\mathbf{D}_s}^{\top}+\lambda_3\mathbf{I}_{\mathcal{C}})^{-1}
	\end{aligned}
\end{equation}
For imposed orthogonal constraint, SVD technique is utilized to derive an approximate solution of ${\mathbf{W}_s}^{(t+1)}$:
\begin{equation}
	\begin{aligned}
		\label{Ws2}
		L({\mathbf{W}_s})= \mathbf{\hat{S}\Gamma_2{S}^\top}
	\end{aligned}
\end{equation}
\begin{equation}
	\begin{aligned}
		\label{Ws3}
		{\mathbf{W}_s}^{(t+1)}= \mathbf{\hat{S}{S}^\top}
	\end{aligned}
\end{equation}
\textbf{Update $\mathbf{W}_{t}$:} By removing all variables but $\mathbf{W}_{t}$, the optimization objective of $\mathbf{W}_{t}$ can be converted into:
\begin{equation}
	\begin{aligned}
		\label{Wt}
		\min_{{\mathbf{W}_t}{\mathbf{W}_t}^{\top}=\mathbf{I}_{r}}\|\mathbf{W}_s\mathbf{D}_{s}-\mathbf{B}_{s}\|_{F}^{2}+\lambda_3\|\mathbf{W}_s-\mathbf{W}_{t}\|_{F}^{2}
	\end{aligned}
\end{equation}
Similar to updating ${\mathbf{W}_s}$, by setting the derivative of ${\mathbf{W}_t}$ as $L({\mathbf{W}_t})$, we reach the equation below:
\begin{equation}
	\begin{aligned}
		\label{Wt1}
		L({\mathbf{W}_t})= ({\mathbf{B}_t}{\mathbf{D}_t}^{\top}+\lambda_3{\mathbf{W}_s})({\mathbf{D}_t}{\mathbf{D}_t}^{\top}+\lambda_3\mathbf{I}_{\mathcal{C}})^{-1}.
	\end{aligned}
\end{equation}
Then the SVD-based updating rule of ${\mathbf{W}_t}^{(t+1)}$ is as follows:
\begin{equation}
	\begin{aligned}
		\label{Wt2}
		L({\mathbf{W}_t})= \mathbf{\hat{K}\Gamma_3{K}^\top}
	\end{aligned}
\end{equation}
\begin{equation}
	\begin{aligned}
		\label{Wt3}
		{\mathbf{W}_t}^{(t+1)}= \mathbf{\hat{K}{K}^\top}
	\end{aligned}
\end{equation}

Algorithm \ref{Alg1} presents a summary detailed optimization procedure of two stage.
\begin{algorithm}[t]
	\caption{Training Process of PSCA}\label{Alg1}
	\small
	\KwIn{Both domain samples ${\mathbf{X}_s} \in {\mathbb{R}^{d \times n_s}}$ and ${\mathbf{X}_t} \in {\mathbb{R}^{d \times n_t}}$; training labels ${\mathbf{Y}_s\in\mathbb{R}^{n_s\times c}}$; hash code length $r$; maximum iteration $T$; subspace dimension $q$; parameters $\lambda_1$, $\lambda_2$, $\lambda_3$, $\sigma$.}
	\KwOut{The optimal variable collection $\mathcal{W} = \{\mathbf{P}$, $\mathbf{R}$, $\mathbf{O}$, $\mathbf{B}_s$,$ \mathbf{B}_t$, $\mathbf{W}_s$, $\mathbf{W}_t\}$.}
	\tcp{Stage One: Prototype-based Semantic Consistency Alignment}
	Generate pseudo-labels ${\mathbf{\hat{Y}}_{t}\in\mathbb{R}^{n_t\times c}}$; initialize $\mathbf{{R}}=\mathbf{\hat{Y}}_{t}$; concatenate the whole semantic matrix as $\mathbf{\widetilde{Y}}=[\mathbf{{Y}}_{s},\mathbf{R}]^\top$.  
	
	Initialize projection matrix $\mathbf{P} \in {\mathbb{R}^{d \times q}}$. 
	
	Initialize class prototypes $\mathbf{O}\in\mathbb{R}^{q\times c}$ by a centroid calculation function $\mathcal{K}=\frac{\sum_{i=1}^{n} {\mathbf{p}}^{\top}{\mathbf{x}_{i}}{\widetilde{y}_{ij}}}{\sum_{i=1}^{n} {\widetilde{y}_{ij}}}$, $\forall 1 \leq j \leq c$. \\
	\While{$t \leqslant T$}{
		Update $\mathbf{P}$ by using Eq. \eqref{P_step2};\\
		Update $\mathbf{R}$ by using Eq. \eqref{R_2};\\
		Update $\mathbf{O}$ by using Eq. \eqref{O_Step};
	}
	\tcp{Feature Reconstruction}
	Reconstruct source feature representation $\mathbf{\widetilde{X}}_{s} = [\mathbf{\widetilde{x}}_{s_1}, \mathbf{\widetilde{x}}_{s_2},\dots,\mathbf{\widetilde{x}}_{s_{ns}}]$, where $\mathbf{\widetilde{x}}_{s_i} = \sum_{m}^{c}{y}_{im}\mathbf{o}_{m}^\top$.
	
	Reconstruct source feature representation $\mathbf{\widetilde{X}}_{t} = [\mathbf{\widetilde{x}}_{t_1}, \mathbf{\widetilde{x}}_{t_2},\dots,\mathbf{\widetilde{x}}_{t_{nt}}]$, where $\mathbf{\widetilde{x}}_{t_i} = \sum_{m}^{c}{r}_{im}\mathbf{o}_{m}^\top$.
	
	Fuse  $\mathbf{\widetilde{X}}=[\mathbf{\widetilde{X}}_{s},\mathbf{\widetilde{X}}_{t}]$ with the projected feature $\mathbf{P}^{\top}\mathbf{X}$ as $\mathbf{D}=[{\mathbf{D}_{s}},{\mathbf{D}_{t}}]=[{\mathbf{d}_{s}}_1,...,{\mathbf{d}_{s}}_{ns},{\mathbf{d}_{t}}_1,...,{\mathbf{d}_{t}}_{nt}]=[\mathbf{\widetilde{X}},\mathbf{X}^\top\mathbf{P}]^\top\in\mathbb{R}^{\mathcal{C}\times n}$, where $\mathcal{C}=q+d$.\\
	\tcp{Stage Two: Hash Learning}
	Initialize orthogonally $\mathbf{W_s}$, $\mathbf{W_t}$.\\
	
	Initialize $\mathbf{B}_s$, and $\mathbf{B}_t$ randomly.\\
	\While{$t \leqslant T$}{
		Update $\mathbf{B}_s$ by using Eq. \eqref{bs};\\
		Update $\mathbf{B}_t$ by using Eq. \eqref{bt};\\
		Update $\mathbf{W}_s$ by using Eq. \eqref{Ws3};\\
		Update $\mathbf{W}_t$ by using Eq. \eqref{Wt3};
	}
\end{algorithm}
\begin{table*}[t]
	\centering
	\fontsize{9}{12}\selectfont
	\setlength{\tabcolsep}{1.4mm}
	{
		\begin{tabular}{c|cccc|c}
			\hline
			Baseline  & PWCF & DAPH* & TSS & DCS-LSG & PSCA  \\ \hline
			MNIST$\to$USPS  & 181.0 & 2.8 & 28.3 & 2.9 & 1.8\\
			COIL1$\to$COIL2   & 518.2 & 15.1 & 16.4 & 5.3 & 4.6 \\
			A$\to$D      &10621.5 & 356.9& 205.9 & 193.6 & 212.3 \\
			P$\to$R     &38011.3 & 607.5& 408.9 &252.2 & 240.2  \\\hline
			Computation Complexity &$\mathcal{O}(n^3+d^2n)$ &$\mathcal{O}(n^2d+d^2n+d^3)$ & $\mathcal{O}(n^2d+d^3)$& $\mathcal{O}(n^2d+d^2n+d^3)$ &$\mathcal{O}(n^2d+d^3+{n_s}^2d+{n_t}^2d)$\\
			\hline
		\end{tabular} 
	}
	\caption{Running time (s) comparison of PSCA and multiple baselines across different benchmark datasets at 64 Bits.}
	\label{table_time}
\end{table*}
\subsection{Algorithm Analysis}
As a domain adaptive retrieval hashing algorithm, PSCA is expected to have good convergence property and efficient retrieval speed. Thus, beginning with analysing on convergence of PSCA, this subsection further studies computational complexity analysis and running time comparisons.

\textit{\textbf{Convergence:}} Notice that the objective functions, i.e., Eqs. \eqref{Stage_One1} and  \eqref{Stage_Two2} of PSCA, are nonconvex to be solved. We adopt a surrogate strategy for optimizing variable collection $\mathcal{W} = \{\mathbf{P}$, $\mathbf{R}$, $\mathbf{O}$, $\mathbf{B}_s$, $\mathbf{B}_t$, $\mathbf{W}_s$,$ \mathbf{W}_t\}$ as Definition 1.
\begin{definition}
	For the nonconvex optimization problem, its approximate closed-form solution can be derived by transforming it into solving several convex subproblems.
\end{definition}
Define $\mathcal{L}(\mathcal{W})$ be the overall objective function value of two stages. By definition 1, the proposed Algorithm \ref{Alg1} ensures the monotonic descent of $\mathcal{L}(\mathcal{W})$ during iterative process, conditional on all subproblems reaching their global optima. For theoretical analysis, we verify the optimization procedures as Lemma 1.
\begin{lemma}
	Algorithm \ref{Alg1} guarantees that $\mathcal{L}(\mathcal{W})$ value decreases monotonically and eventually reaches convergence.
\end{lemma}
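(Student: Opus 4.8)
The plan is to prove Lemma 1 by the standard block coordinate descent argument: show that every update in Algorithm \ref{Alg1} is non-increasing on the objective, that the objective is bounded below, and then invoke the monotone convergence theorem. Since the two stages are optimized sequentially --- stage one fixes $\mathbf{P},\mathbf{R},\mathbf{O}$ before the reconstruction $\mathbf{D}$ is formed, and stage two is optimized thereafter --- I would treat $\mathcal{L}(\mathcal{W})$ as the concatenation of two independently monotone sequences $\mathcal{L}_1$ (Eq. \eqref{Stage_One1}) and $\mathcal{L}_2$ (Eq. \eqref{Stage_Two2}), and establish convergence of each. Throughout I would lean on Definition 1, which licenses replacing each hard subproblem by a convex surrogate that is solved to its global optimum.

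For the within-block non-increase, I would note that the updates of $\mathbf{P}$ (Eq. \eqref{P_step2}), $\mathbf{B}_s$ and $\mathbf{B}_t$ (Eqs. \eqref{bs},\eqref{bt}) are exact global minimizers of their respective convex (resp. separable binary) subproblems, so they trivially give $\mathcal{L}(\mathcal{W}^{(t+1)})\le\mathcal{L}(\mathcal{W}^{(t)})$; in particular $\mathbf{B}_s=\mathrm{sgn}(\mathbf{W}_s\mathbf{D}_s)$ minimizes a fully separable quadratic over $\{-1,1\}$. The $\mathbf{R}$ subproblem (Eq. \eqref{R_1}) is convex on the probability simplex --- $r_{ij}^{\sigma}$ is convex for $\sigma>1$ and $-\log r_{ij}$ is convex --- so the projected update of Eq. \eqref{R_2} with a suitable step size is a descent step and does not increase $\mathcal{L}_1$.

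The delicate updates are the orthogonality-constrained ones and the $\ell_{2,1}$ regularizer, and these I would unify under a majorization--minimization viewpoint. For $\|\mathbf{P}\|_{2,1}$, the reweighted trace form (Eq. \eqref{21_norm_tr}) with $\mathbf{A}$ frozen at the current iterate is a majorizer of the true norm that is tangent at $\mathbf{P}^{(t)}$, by the concavity inequality $\sqrt{t}\le\sqrt{t_0}+(t-t_0)/(2\sqrt{t_0})$, so minimizing the surrogate lowers the genuine $\ell_{2,1}$ value. Similarly, the SVD-based updates of $\mathbf{O}$ (Eq. \eqref{O_Step}), $\mathbf{W}_s$ (Eq. \eqref{Ws3}) and $\mathbf{W}_t$ (Eq. \eqref{Wt3}) return the global optimum of a relaxed Procrustes surrogate, and I would verify that this surrogate upper-bounds the corresponding subproblem and coincides with it at the previous iterate, so each SVD step is again non-increasing. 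Chaining these block-wise inequalities yields $\mathcal{L}(\mathcal{W}^{(t+1)})\le\mathcal{L}(\mathcal{W}^{(t)})$ over a full sweep of each stage.

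Finally I would establish the lower bound: the MMD term satisfies $\mathrm{Tr}(\mathbf{P}^\top\mathbf{X}\mathbf{H}\mathbf{X}^\top\mathbf{P})\ge 0$ because $\mathbf{H}$ is positive semidefinite, every squared-norm and $\ell_{2,1}$ term is nonnegative, and on the simplex $-\psi_{ij}\log r_{ij}\ge 0$; hence $\mathcal{L}(\mathcal{W})\ge 0$. A sequence that is non-increasing and bounded below converges, which proves the claim. The main obstacle I anticipate is rigorously certifying the surrogate property for the relaxed $\mathbf{O}$, $\mathbf{W}_s$, $\mathbf{W}_t$ updates: the quadratic data terms $\mathrm{Tr}(\mathbf{O}^\top\mathbf{S}_2\mathbf{O})$ and its $\mathbf{W}$-analogues are not constant on the Stiefel manifold, so I must show the dropped or linearized piece still leaves a valid upper bound that is tight at the current point, rather than merely solving a loosely related Procrustes problem --- otherwise monotone descent of the true $\mathcal{L}$ is not guaranteed by these steps alone.
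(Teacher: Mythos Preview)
Your plan is the paper's own argument: block-wise non-increase plus a nonnegative lower bound, then monotone convergence. The paper's proof is actually less careful than your outline --- it simply asserts that every subproblem reaches its optimal closed-form solution, writes the chain $\mathcal{L}(\mathcal{W}^{T})\geq\cdots\geq\mathcal{L}(\mathcal{W}^{T-1})$, and invokes bounded monotone convergence, without separately justifying the $\ell_{2,1}$ reweighting or the orthogonality-constrained steps. The obstacle you anticipate for the relaxed SVD updates of $\mathbf{O}$, $\mathbf{W}_s$, $\mathbf{W}_t$ is real (the paper itself calls these ``relaxed'' and ``approximate'' solutions) but is not resolved in the published proof either, so your proposal already meets the paper's level of rigor and in fact exceeds it.
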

\begin{proof}
	Notice that Eqs. \eqref{Stage_One1} and \eqref{Stage_Two2} are consequentially bounded below since that the strictly positive nature of the Frobenius norm sum (at least bounded below by a constant $\omega \geq 0$). Following the detailed derivations in subsection \textit{Solution Process}, all variables can reach their optimal closed-form solutions. In other words, we have following monotonic sequence during each iteration:
	\begin{equation}
		\begin{aligned}
			&\mathcal{L}(\mathcal{W}^{T}) \geq \mathcal{L}(\mathbf{P}^{T-1}, \mathbf{R}^T, \mathbf{O}^T, {\mathbf{B}_s}^T, {\mathbf{B}_t}^T, {\mathbf{W}_s}^T, {\mathbf{W}_t}^T) 
			\\&\geq \mathcal{L}(\mathbf{P}^{T-1}, \mathbf{R}^{T-1}, \mathbf{O}^T, {\mathbf{B}_s}^T, {\mathbf{B}_t}^T, {\mathbf{W}_s}^T, {\mathbf{W}_t}^T) 
			\\&\geq \mathcal{L}(\mathbf{P}^{T-1}, \mathbf{R}^{T-1}, \mathbf{O}^{T-1}, {\mathbf{B}_s}^T, {\mathbf{B}_t}^T, {\mathbf{W}_s}^T, {\mathbf{W}_t}^T) 
			\\&\geq \mathcal{L}(\mathbf{P}^{T-1}, \mathbf{R}^{T-1}, \mathbf{O}^{T-1}, {\mathbf{B}_s}^{T-1}, {\mathbf{B}_t}^T, {\mathbf{W}_s}^T, {\mathbf{W}_t}^T) 
			\\&\geq \mathcal{L}(\mathbf{P}^{T-1}, \mathbf{R}^{T-1}, \mathbf{O}^{T-1}, {\mathbf{B}_s}^{T-1}, {\mathbf{B}_t}^{T-1}, {\mathbf{W}_s}^T, {\mathbf{W}_t}^T)
			\\&\geq \mathcal{L}(\mathbf{P}^{T-1}, \mathbf{R}^{T-1}, \mathbf{O}^{T-1}, {\mathbf{B}_s}^{T-1}, {\mathbf{B}_t}^{T-1}, {\mathbf{W}_s}^{T-1}, {\mathbf{W}_t}^T)
			\\&	\geq\mathcal{L}(\mathcal{W}^{T-1})
		\end{aligned}
	\end{equation}
	The iterative sequence confirms that progressively updating each variable guarantees monotonic reduction of the objective function value $\mathcal{L}(\mathcal{W})$. On the basis of the Bounded Monotone Convergence theory \cite{russell2020principles}, Algorithm \ref{Alg1} achieves good convergence property of PSCA.\renewcommand{\qedsymbol}{$\blacksquare$}
\end{proof}

For visual analysis, we plot convergence curves on all benchmark datasets. As illustrated in Figure \ref{COV}, the objective function value exhibits fast convergence, reaching stable values within 15 iterations. The decreasing loss confirms algorithmic good convergence.

\begin{figure}[t]
	\centering
	\begin{center}
		\centering
		\subfigure[MNIST$\to$USPS]{
			\includegraphics[width=0.22\textwidth]{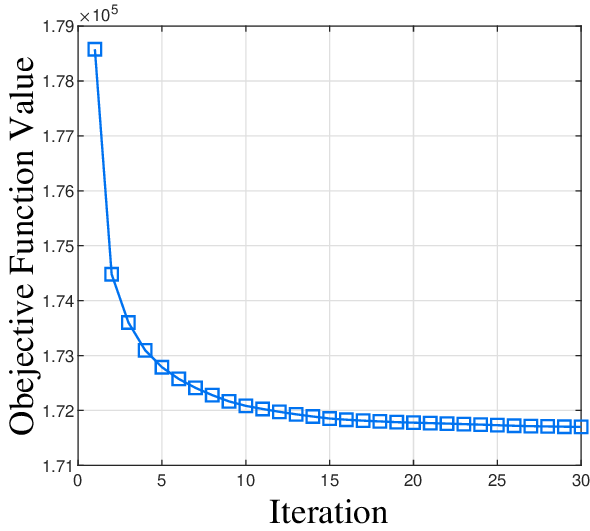}}
		\subfigure[COIL1$\to$COIL2]{
			\includegraphics[width=0.22\textwidth]{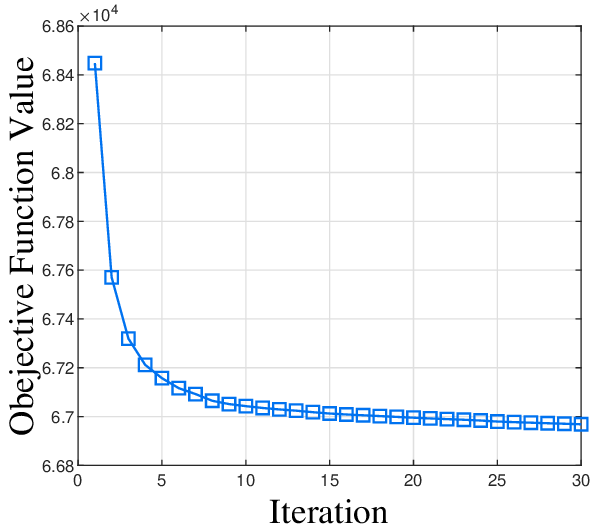}}
		\subfigure[A$\to$D]{
			\includegraphics[width=0.22\textwidth]{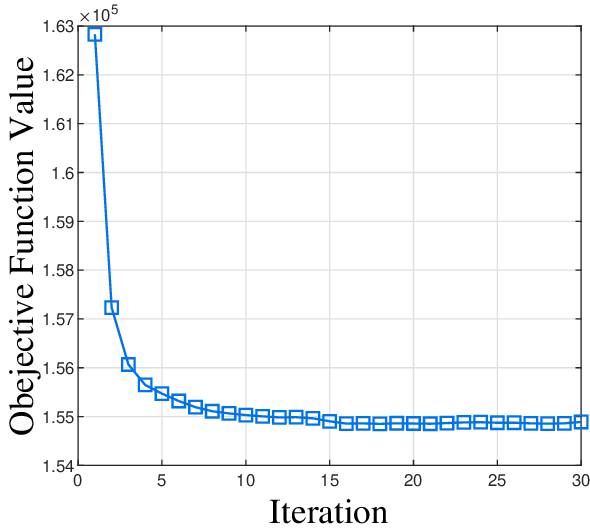}}
		\subfigure[P$\to$R]{
			\includegraphics[width=0.22\textwidth]{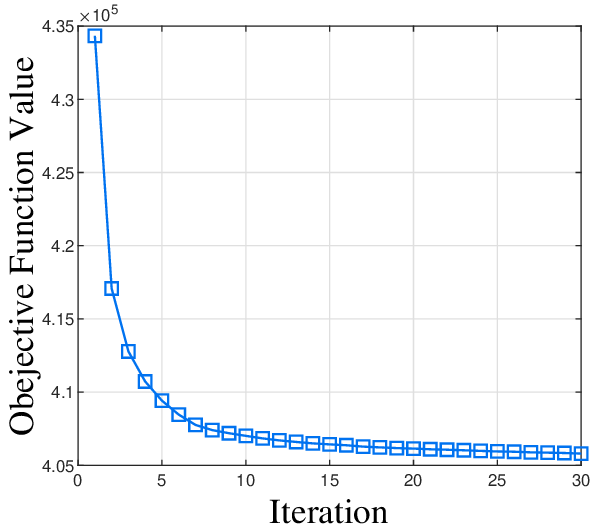}}
		\caption{Convergence analysis of PSCA.}
		\label{COV}
	\end{center}
\end{figure}

\textit{\textbf{Computational Complexity:}} Note that feature reconstruction involves simple linear combination operations, thus the computational burden of the proposed PSCA mainly comes from the optimization procedures of two stages. We report the computational complexity analyses of each optimization step as follows: $\mathbf{P}_{step} = \mathcal{O}(d^2n+dn^{2}+d^{3}+dnc+d^{2}q+dcq)$; $\mathbf{R}_{Step} = \mathcal{O}(dq{n_t}+qc{n_t}+{n_t}{c}^2)$; $\mathbf{O}_{Step} = \mathcal{O}(qdn+qnc+q^{2}c+qc^{2})$; ${\mathbf{B}_s}_{Step} = \mathcal{O}(r\mathcal{C}n_s)$; ${\mathbf{B}_t}_{Step} = \mathcal{O}(r\mathcal{C}n_t)$; ${\mathbf{W}_s}_{Step} = \mathcal{O}(rn_s\mathcal{C}+\mathcal{C}{n_s}^2+\mathcal{C}^3+r\mathcal{C}^2)$; ${\mathbf{W}_t}_{Step} = \mathcal{O}(rn_t\mathcal{C}+\mathcal{C}{n_t}^2+\mathcal{C}^3+r\mathcal{C}^2)$. The time complexity of adaptively weighting $\alpha$ is $\mathcal{O}({n_t}cq)$.

Seeing that $r,c<q<<d<n$ where $n=n_s+n_t$, thus for $T$ iterations, the overall computation complexity of PSCA can be approximated by $\mathcal{O}(T(n^2d+d^3+{n_s}^2d+{n_t}^2d)$. Benefitting from our efficient optimization strategies, PSCA exhibits competitive training efficiency and retrieval speed, we now conduct the running time analysis. 

\textit{\textbf{Running Time Comparison:}} Table \ref{table_time} reports the training time comparison of PSCA with baselines PWCF \cite{huang2020probability}, DAPH* \cite{huang2021domain}, TSS \cite{chen2023two}, DCS-LSG \cite{zhang2024dynamic}. Experiments are conducted uniformly on a computer with an Intel(R) Core(TM) i9-9900K CPU @ 3.60GHz, 32GB RAM and a 64-bit Windows operating system. It can be observed that the training time depends on both dataset size and feature dimensions. We can make the following observations:
\begin{itemize}
	\item {\texttt{}} Regarding the impact of dataset size, we analyze the experimental outcomes from two cases: A$\to$D (4,096$\times$3,315) and P$\to$R (4,096$\times$8,796). Our findings reveal that as the sample quantity expands, the training time requirements exhibit a proportional increase. This observation indicates that algorithm's computational overhead scales linearly with dataset volume.
	\item {\texttt{}} Concerning feature dimensionality effects, we conduct comparative experiments using MNIST$\to$USPS (256$\times$3,800) and A$\to$D (4,096$\times$3,315) datasets. While these datasets contain comparable sample sizes, they differ substantially in their feature space dimensions. The experimental outcomes demonstrate that when feature dimensionality increases, the algorithm's training duration exhibits significant growth patterns.
\end{itemize}
Benefiting from orthogonal class prototype learning, PSCA avoids inefficient pair-wise sample alignment strategies, and this approach contributes to its computational efficiency. As demonstrated by the running time comparison on cases A$\to$D and P$\to$R, PSCA exhibits greater robustness to dimensionality growth compared to baseline methods. Overall, we conclude that the computational efficiency of PSCA proves advantageous for high-dimensional data processing.

\section{Experiment}
\subsection{Parameter Sensitivity}
While our experimental results demonstrate superior performance of PSCA across various datasets, it is essential to investigate how sensitive the method is to parameter choices, which directly affects its robustness and applicability.
\begin{figure}[t]
	\centering
	\begin{center}
		\centering
		\subfigure[$\lambda_1$]{\label{PSA}
			\includegraphics[width=0.22\textwidth]{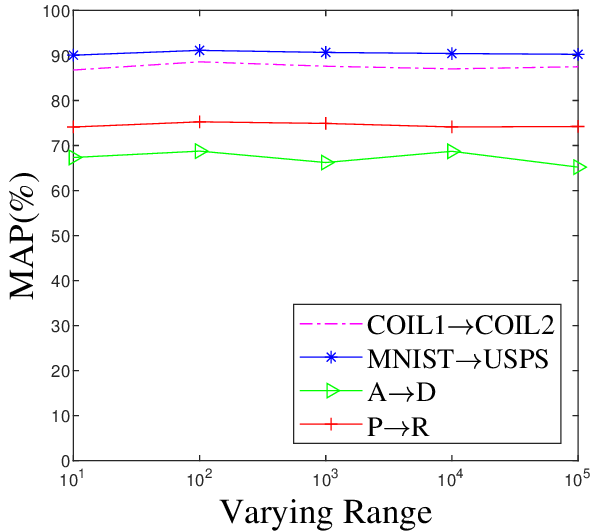}}
		\subfigure[$\lambda_2$]{\label{PSB}
			\includegraphics[width=0.22\textwidth]{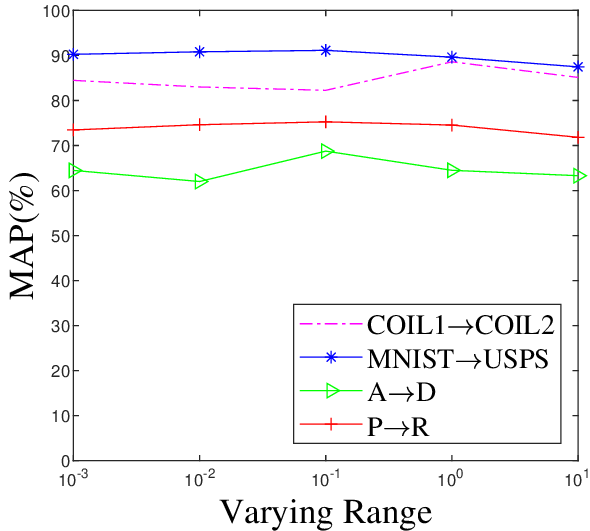}}
		\subfigure[$\lambda_3$]{\label{PSC}
			\includegraphics[width=0.22\textwidth]{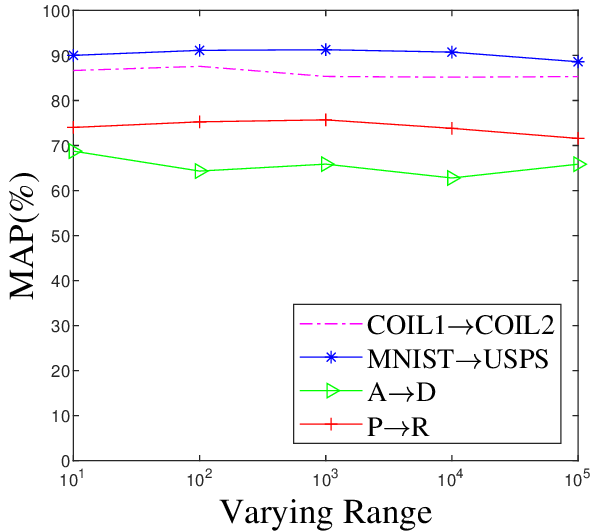}}
		\subfigure[$\alpha$]{\label{PSD}
			\includegraphics[width=0.22\textwidth]{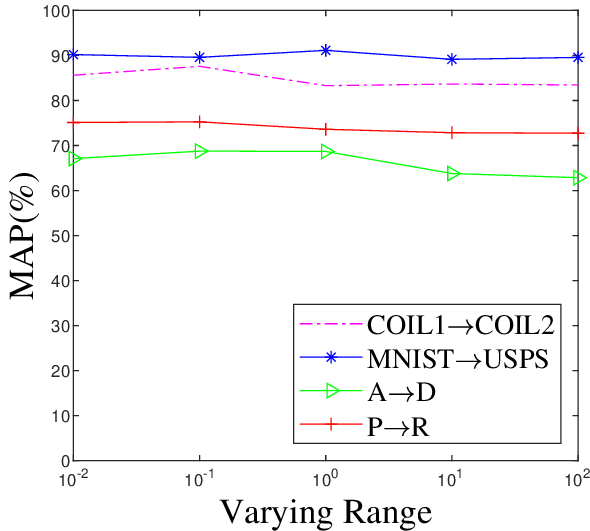}}
		\caption{Parameter sensitivity analysis of PSCA on cases MNIST$\rightarrow$USPS, COIL1$\rightarrow$COIL2, A$\rightarrow$D and P$\rightarrow$R.}
		\label{PS}
	\end{center}
\end{figure}
The proposed PSCA framework involves several key trade-off parameters that control different aspects of the learning process: 1) $\lambda_1$ controls the weight of domain marginal alignment through MMD, 2) $\lambda_2$ regulates the sparsity constraint on the projection matrix ${\mathbf{P}}$, and 3) $\lambda_3$ balances the approximation between dual domain-specific quantization functions. Additionally, to analyze the effectiveness of our semantics consistency alignment, we manually adjusts the semantic fusion strength factor $\alpha$ instead of adaptive strategy, where larger $\alpha$ values encourage $\psi$ increasing, indicating greater trust placed in the pseudo-labels.

To study the stability and robustness of PSCA, we systematically adjust each parameter while keeping others fixed within a reasonable range, then evaluate the resulting MAP scores across different parameter combinations. Figure \ref{PS} illustrates the sensitivity curves showing how MAP scores vary under different parameter combinations. Specifically, $\lambda_1$ and $\lambda_3$ vary within the range of [$10^{1}$, ${10}^{2}$, $\cdots$, ${10}^{5}$]. Varying ranges for $\lambda_2$ and $\alpha$ are as [${10}^{-3}$, ${10}^{-2}$, $\cdots$, ${10}^{1}$] and [${10}^{-2}$, ${10}^{-1}$, $\cdots$, ${10}^{2}$] respectively. By analysing Figure \ref{PS}, we draw the following two observations:
\begin{itemize}
	\item \textit{\textbf{Parameter Robustness:}} As illustrated in Figures \ref{PSA}, \ref{PSB} and \ref{PSC}, PSCA demonstrates remarkable stability across varying parameter configurations for $\lambda_1$, $\lambda_2$, and $\lambda_3$. The MAP scores remain relatively consistent within reasonable parameter ranges, indicating that PSCA is not overly sensitive to parameter selection.
	\item \textit{\textbf{Geometric Proximity Validation:}} Figure \ref{PSD} reveals a insightful pattern regarding the semantic fusion strength $\alpha$. Across all datasets, MAP scores initially improve with increasing $\alpha$ values but begin to deteriorate after reaching optimal thresholds. This degradation occurs when pseudo-label semantics start to dominate the prototype learning process, gradually weakening the contribution of geometric guidance. This phenomenon indicates that our semantic consistency alignment effectively mitigates the adverse effects of erroneous semantics, as excessive reliance on potentially noisy pseudo-labels inevitably compromises retrieval effectiveness.
\end{itemize}
In summary, we can draw the conclusion that the parameters of PSCA are not particularly sensitive within a reasonable range of variation.

\end{document}